\documentclass{article}

\usepackage{microtype}
\usepackage{graphicx}
\usepackage{subcaption}
\usepackage{booktabs} 
\usepackage{multirow}

\usepackage{hyperref}

\usepackage[accepted]{icml2026}

\usepackage{amsmath}
\usepackage{amssymb}
\usepackage{mathtools}
\usepackage{amsthm}
\usepackage[most]{tcolorbox}
\usepackage{fontawesome5}
\tcbuselibrary{theorems}

\usepackage[capitalize,noabbrev]{cleveref}

\newcommand{\eg}{{\it e.g.}}
\newcommand{\ie}{{\it i.e.}}

\theoremstyle{plain}
\newtheorem{theorem}{Theorem}[section]
\newtheorem{proposition}[theorem]{Proposition}
\newtheorem{lemma}[theorem]{Lemma}
\newtheorem{corollary}[theorem]{Corollary}
\theoremstyle{definition}
\newtheorem{definition}[theorem]{Definition}
\newtheorem{assumption}[theorem]{Assumption}
\theoremstyle{remark}

\definecolor{takeawayheader}{RGB}{180, 0, 0}   

\definecolor{takeawaybg}{RGB}{255, 245, 245}    

\newtcbtheorem{takeaway}{Takeaway}{
    colback=takeawaybg,       
    colframe=takeawayheader,  
    colbacktitle=takeawayheader, 
    coltitle=white,           
    fonttitle=\bfseries,      
    arc=1mm,                  
    boxrule=0.5mm,            
    separator sign none,      
    before skip=10pt, after skip=10pt,
    left=2pt, right=2pt, top=2pt, bottom=2pt
}{take}

\usepackage[textsize=tiny]{todonotes}

\icmltitlerunning{Action-Sufficient Goal Representations}
\allowdisplaybreaks

\begin{document}

\twocolumn[
  \icmltitle{Action-Sufficient Goal Representations\\ 
  }



  \icmlsetsymbol{equal}{*}

  \begin{icmlauthorlist}
    \icmlauthor{Jinu Hyeon}{equal,IPAI}
    \icmlauthor{Woobin Park}{equal,snueecs}
    \icmlauthor{Hongjoon Ahn}{equal,snueecs,tl}
    \icmlauthor{Taesup Moon}{IPAI,snueecs,others}
  \end{icmlauthorlist}

  \icmlaffiliation{snueecs}{Department of Electrical and Computer Engineering (ECE), Seoul National University}
  \icmlaffiliation{tl}{Trillion Labs}
  \icmlaffiliation{IPAI}{Interdisciplinary Program in Artificial Intelligence (IPAI), Seoul National University}
  \icmlaffiliation{others}{ASRI / INMC, Seoul National University}

  \icmlcorrespondingauthor{Taesup Moon}{tsmoon@snu.ac.kr}

  \icmlkeywords{Machine Learning, ICML}

  \vskip 0.3in
]



\printAffiliationsAndNotice{* Equal Contribution \\}  

\begin{abstract}
  In offline goal-conditioned reinforcement learning (GCRL), hierarchical approaches decompose long-horizon tasks into high-level subgoal prediction and low-level action execution.
  A critical design choice in such architectures is the goal representation—the compressed encoding of goals that serves as the interface between these levels. 
  Existing methods derive this representation from value learning, implicitly assuming that information sufficient for value estimation is adequate for optimal action prediction.
  We show that this assumption can fail even under exact value estimation, as such representations may collapse goals requiring distinct optimal actions.
  To address this, we introduce \textit{action sufficiency}, an information-theoretic condition on goal representations necessary for optimal action prediction.
  We prove that value sufficiency, the preservation of sufficient information for value estimation, does not imply action sufficiency and empirically verify that the latter is more strongly associated with control success in a discrete environment.
  We further demonstrate that an actor-based representation, naturally induced by standard log-likelihood training of the low-level policy, is approximately action-sufficient.
  Empirically, our actor-based representations consistently outperform representations learned via value function estimation.

  \vspace{0.5em}
  \centerline{\bfseries
    \href{https://action-sufficient.github.io}{\faGlobeAmericas~Project Page}\hspace{2em}
    \href{https://github.com/action-sufficient/action-sufficient}{\faGithub~Code}%
  }
\end{abstract}


\section{Introduction}

The primary advantage of offline goal-conditioned reinforcement learning (GCRL) lies in its ability to extract diverse and complex skills from pre-collected datasets without further environment interaction \cite{(HorizonReduction)Park2025,(RIS)chane2021,(CRL)Eysenbach2022,(GCRLsurey)Liu2022,(GoFAR)Ma2022}. However, the efficacy of this paradigm fundamentally depends on how state and goal information are encoded into a shared representation space to bridge the gap between observed data and target behaviors. 

Goal representations become even more critical in long-horizon tasks, where hierarchical frameworks are often employed. In such systems, a high-level policy provides subgoals to a low-level policy, and the effectiveness of this hierarchy hinges on the quality of the goal representation space used to communicate these subgoals. A prominent example of this approach is HIQL \cite{(HIQL)Park2023}, which learns goal representations directly through the value learning objective and utilizes them to facilitate subgoal prediction. While HIQL has demonstrated significant empirical success in complex, long-horizon environments, a systematic understanding of why value-centric representations work—and specifically how they relate to the downstream low-level policy—remains incomplete. \citet{(DualGoal)Park2025} recently provided a rigorous interpretation from the standpoint of value estimation, but we note a critical gap persists: the lack of a policy-oriented analytical framework. 

Our work challenges the implicit assumption of previous approaches that features optimized for learning value functions naturally facilitate policy learning. 
To this end, 
we first conduct empirical evaluations on the \texttt{cube} task from OGBench \cite{(OGBench)Park2025}, specifically measuring the performance of low-level policies. Our results reveal a significant failure mode: agents utilizing value-centric representations exhibit markedly low success rates, suggesting that such features may lack critical information for effective action selection. 


To explain this phenomenon, we introduce the formal concepts of action sufficiency and value sufficiency, defined by whether a goal representation preserves necessary information for accurate optimal policy derivation or value prediction, respectively.
Using these definitions, we theoretically demonstrate that value sufficiency does not inherently guarantee action sufficiency of goal representations. We show that representations optimized solely to reflect value function can discard relative action-relevant nuances, leading to a fundamental disconnect between value estimation accuracy and policy success. This theoretical finding is further validated in a discrete environment, where control success aligns with action sufficiency rather than value sufficiency, even among representations that preserve the optimal value. 

Consequently, we propose a simple approach that integrates representation learning directly into the actor's objective, encouraging the representation to preserve the information-theoretic requirements for optimal action prediction. On the OGBench benchmark, we verify that replacing value-based representations with our actor-based ones yields consistent improvements, particularly on high-difficulty tasks where value-based representations collapse.
\section{Related Work}

\textbf{Offline goal-conditioned reinforcement learning.}
Offline GCRL learns goal-reaching policies from pre-collected datasets without further environment interaction, combining the dataset constraints and distribution-shift issues of offline RL \cite{(offlineRL)levine2020} with the goal-reaching objectives of GCRL \cite{(UVFA)Schaul2015, (HER)Andrychowicz2017,(HGG)Ren2019,(GCRLsurey)Liu2022,(GOALIVE)Zheng2024}. 

One of the primary approaches to tackling offline GCRL, which constitutes the main focus of our analysis, is to extract a policy from a learned value function. These methods exploit datasets by learning goal-conditioned value functions via temporal-difference (TD) learning \cite{(IQL)Kostrikov, (HILP)Park2024}, state-occupancy matching \cite{(GoFAR)Ma2022, (adversarial)durugkar2021}, contrastive representations \cite{(VIP)Ma2022, (CRL)Eysenbach2022, (single)Liu2025}, or quasimetrics \cite{(QRL)Wang2023,(HorizonGeneralization)Myers2025}. Another approach involves diffusion-based planners \cite{(diffuser)Janner2022,(CompDiffuser)Luo2025}, which generate trajectories directly without relying on value-derived policies, placing them outside the scope of our analysis.

\textbf{Hierarchical methods for GCRL.}
Long-horizon GCRL is typically addressed by decomposing decision-making into a high-level subgoal planner and a low-level controller 
\cite{(Feudal)Dayan1992, (h-DQN)Kulkarni2016, (FuNs)Vezhnevets2017, (HIRO)nachum2018, (HRLsurvey)Pateria2021}, a paradigm instantiated in the offline setting by HIQL \cite{(HIQL)Park2023}. Recent advances focus on improving value estimation within this regime via temporal abstraction \cite{(OTA)Ahn2025,(HorizonReduction)Park2025} or physics-informed regularization \cite{(eik-hiql)Giammarino2025}.


\textbf{Goal representations in GCRL.}
How goals are represented critically affects policy learning and generalization in GCRL. A variety of representation learning methods have been studied, including embeddings that encode temporal distances \cite{(TimeContrastive)Sermanet2018, (VIP)Ma2022, (HILP)Park2024, (DualGoal)Park2025} and trajectory-level contrastive representations \cite{(CRL)Eysenbach2022, (TRA)Myers2025, (BYOL-gamma)Lawson2026}. In the hierarchical setting of offline GCRL, a goal representation serves as an interface between high- and low-level policies, and is conventionally learned from value-related signals. Specifically, HIQL \cite{(HIQL)Park2023} theoretically established the sufficiency of a goal-only encoder $\phi(g)$ trained via value optimization, but practically adopted a state-dependent representation $\phi(s,g)$ following \cite{(BiLinear)Hong2023}. While this state-dependent design is both mathematically and empirically intuitive, a formal justification for its efficacy has been lacking. We provide this missing rationale from an action-oriented standpoint.

\begin{figure*}[t]
    \centering
    \includegraphics[width=0.85\linewidth]{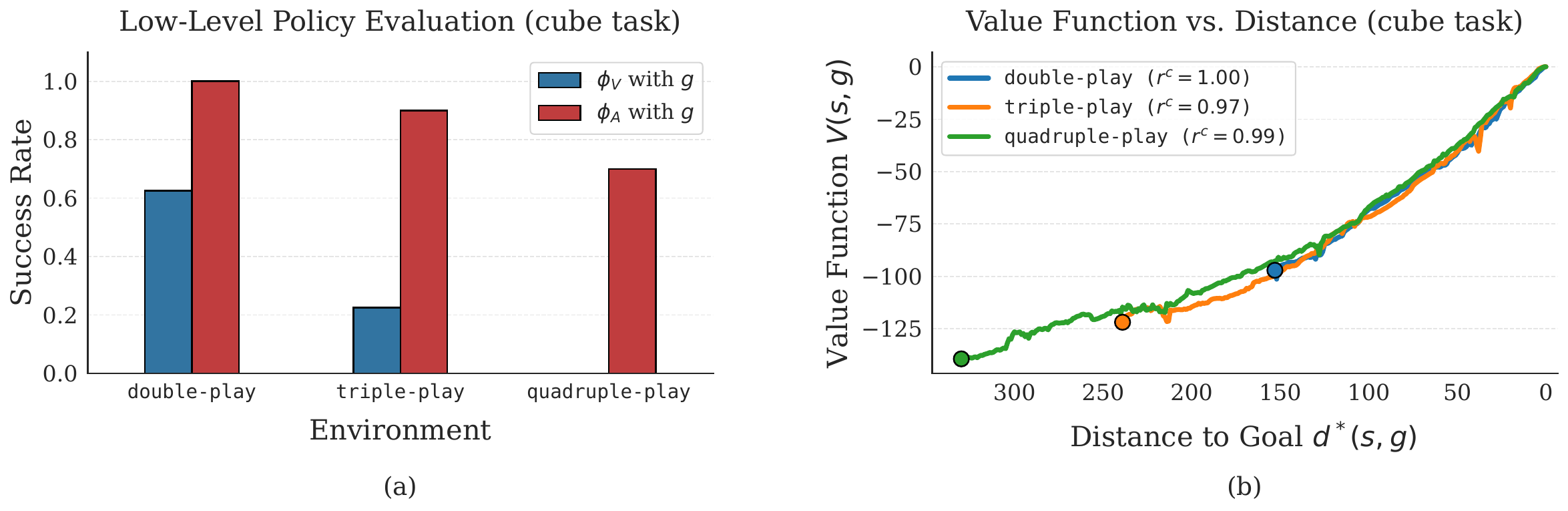}
    \caption{\emph{Left}: The results of low-level policy evaluation on short-horizon subgoal $s_{t+k}$ from optimal trajectory $\tau^{\star}$. \emph{Right}: The visualization of value function $V(s,g)$ by varying $s$ from initial state to goal $g$ in $\tau^{\star}$. Although the value function effectively captures the discounted temporal distance throughout all environments and thereby achieves high order consistency, the low-level policy equipped with $\phi_V$ attains much lower success rate than the counterpart which uses $\phi_A$.}
    \label{fig:motivation}
\end{figure*}

\textbf{Action-aware representations in RL.}
Outside offline GCRL, the principle that representations should retain action-relevant information has been pursued through policy-derived representations \cite{(actionable)ghosh2019,(studying)garcin2025}, inverse-dynamics objectives \cite{(control-endogeneous)Lamb2023,(exogenous-information)Islam2023,(inverse-dynamics)Brandfonbrener2023}, structural formulations of action-sufficient state \cite{(action-sufficient)Huang2022}, and similarity metrics that capture action-relevant structure \cite{(PSM)Agarwal2021, (invariance)Rudolph2024}. While these works establish the necessity of action-aware representations in single-policy settings, the hierarchical offline GCRL framework remains unexamined.
Our contribution lies in providing the first information-theoretic formalization and empirical verification of action sufficiency specifically as a required property for goal representations serving as a hierarchical interface.

\section{Preliminary}
\label{sec:prelim}
\textbf{Problem setting.} GCRL is formulated as a Markov decision process (MDP) defined by the tuple $(\mathcal{S}, \mathcal{A}, \mathcal{G}, r, \gamma, \allowbreak p, p_0, p_\mathcal{G})$. 
Here, $\mathcal{S}$ denotes the state space, $\mathcal{A}$ the action space, and $\mathcal{G}$ the goal space. 
The reward function $r(s,g)$ is conditioned on both the current state $s \in \mathcal{S}$ and the goal $g \in \mathcal{G}$, $\gamma \in (0,1)$ is the discount factor, $p(\cdot \mid s,a)$ is the environment transition dynamics, $p_0(\cdot)$ is the initial state distribution, and $p_\mathcal{G}$ denotes the goal distribution.
We denote by $V(s,g)$ the goal-conditioned state-value function at state $s$ with respect to goal $g$.
Throughout this work, we assume that the goal space coincides with the state space, \ie, $\mathcal{G} = \mathcal{S}$.

In the offline setting, the agent is given access to a fixed dataset $\mathcal{D}$ consisting of trajectories $\tau = (s_0, a_0, s_1, \ldots, s_T)$, collected by an unknown behavior policy $\mu$.
The objective is to learn a goal-conditioned policy $\pi(a \mid s,g)$ that maximizes the expected discounted return
\begin{align}
\mathcal{J}(\pi) = 
\mathbb{E}_{\tau \sim p^{\pi},\, g \sim p_\mathcal{G}}
\left[ \sum_{t=0}^{T} \gamma^t r(s_t,g) \right],
\end{align}
in which the trajectory distribution induced by $\pi$ is given by
$p^{\pi}(\tau) = p_0(s_0)\prod_{t=0}^{T-1} p(s_{t+1} | s_t,a_t)\pi(a_t | s_t,g)$.
For our analysis, we further assume a fixed optimal policy $\pi^\star$, ensuring that the optimal action distribution is well-defined given the current state $s$ and goal $g$.

\textbf{Hierarchical policies in GCRL.} \label{prelim:hiql}
To solve long-horizon goal-reaching tasks, hierarchical approaches decompose decision-making into two levels: a high-level policy (the \textit{planner}) and a low-level policy (the \textit{controller}). A representative framework is HIQL \cite{(HIQL)Park2023}, which we adopt as our main analysis target. The high-level policy $\pi^{h}(s_{t+k} \mid s_t, g)$ proposes a $k$-step subgoal to guide progress toward a distant goal $g$, while the low-level policy $\pi^{\ell}(a_t \mid s_t, s_{t+k})$ controls the agent through primitive actions to reach the subgoal. Both policies are trained via advantage-weighted regression (AWR) \cite{(AWR)Peng2019}, which maximizes
\begin{equation}
\mathcal{J}(\pi^i) = \mathbb{E}_{\mathcal{D}}\!\left[ e^{\beta^i A^i} \log \pi^i \right], \quad i \in \{h, \ell\},
\label{eq:awr}
\end{equation}
where the advantages $A^h = V(s_{t+k}, g) - V(s_t, g)$ and $A^\ell = V(s_{t+1}, s_{t+k}) - V(s_t, s_{t+k})$ weight the behavioral cloning term toward high-value transitions and $\beta^i$ are inverse temperatures. We defer how the underlying value function is learned to Appendix~\ref{app:hiql}.

\textbf{Goal representation learning.}
A critical component of the hierarchical framework is how subgoals are represented and communicated between the policies. In practice, HIQL learns a \emph{value-based goal representation} $\phi_V(s, g)$ as part of the value function training process: the value function is of the form $V(s, \phi_V(s, g))$, where the representation $\phi_V$ is obtained by concatenating state and goal features and is optimized to minimize the value estimation error. Once learned, this same representation $\phi_V$ is frozen and used as a proxy for the subgoal $s_{t+k}$ when training both policies, \ie, $\pi^h(\phi_V(s_t, s_{t+k}) \mid s_t, g)$ and $\pi^\ell(a_t \mid s_t, \phi_V(s_t, s_{t+k}))$. Thus the goal representation $\phi_V$, learned solely from value estimation, serves as the interface between the policies.

\section{Motivation}
\label{sec:motivation}

\subsection{Are Value-based Goal Representations Sufficient for action prediction?}
\label{subsec:motivation_experiment}

The value-based representation $\phi_V$ described above implicitly assumes that features optimized for \textbf{predicting values} are sufficient for \textbf{action prediction} \cite{(HIQL)Park2023, (DualGoal)Park2025}. In this section, we investigate this assumption through an experiment inspired by \citet{(OTA)Ahn2025}, designed to isolate the controllability of the low-level policy.

\textbf{Oracle subgoal evaluation.} 
To isolate low-level controllability from high-level planning errors, we replace predicted subgoals with the oracle planner provided by OGBench \cite{(OGBench)Park2025}. Specifically, given an optimal trajectory $\tau^\star = (s_0, s_1, \cdots, s_T)$, the policy at state $s_t$ conditions on the exact oracle subgoal $s_{t+k}$. In this setting, we train low-level policies using two different goal represetations: the original HIQL \textbf{value-based goal representation} $\phi_V(s, g)$ learned by optimizing the value objective against the \textbf{actor-based goal representation} $\phi_A(s, g)$, which is learned jointly with the low-level policy $\pi^{\ell}(a_t|s_t,\phi_{A}(s_t,g))$ via optimizing the AWR loss in Eq.~\eqref{eq:awr}. 
Note that both low-level policies are trained to reach the original goal $g$ rather than the subgoal $s_{t+k}$, encouraging the representation to generalize across the goal space. Since the subgoal and the final goal lie in the same space ($\mathcal{G} = \mathcal{S}$), the low-level policy can be directly conditioned on the oracle subgoal $s_{t+k}$ from the optimal trajectory $\tau^{\star}$ during evaluation.

\textbf{The discrepancy between value and control.} 
Under this evaluation protocol, we measure the goal success rates of both low-level policies across the OGBench \texttt{cube} tasks. As shown in Figure~\ref{fig:motivation}(a), the policy using $\phi_A$ significantly outperforms the policy using $\phi_V$, despite both receiving perfect subgoal information from the oracle. This performance gap indicates that $\phi_V$ can fail to capture information necessary for action prediction.

\textbf{Is the value function reliable?} 
One might hypothesize that this failure stems from poor training of the value function itself rather than an inherent limitation of the representation. To investigate this, we measure the \emph{order consistency ratio} $r^c(V)$, a metric proposed by \citet{(OTA)Ahn2025} to evaluate the reliability of learned value functions. This ratio is defined as the proportion of transitions along an optimal trajectory $\tau^\star$ for which the value correctly increases toward the goal:
\begin{align}
    r^c(V) = \frac{1}{T-k+1} \sum_{t=0}^{T-k} \mathbf{1} \left[ V(s_{t+k}, g) > V(s_t, g) \right]. \nonumber    
\end{align}
As illustrated in Figure~\ref{fig:motivation}(b), the trained value exhibits high order consistency ratio, confirming that the value network successfully learned the temporal structure of the task.

\textbf{Summary.} 
These findings reveal that representations optimized for value estimation are not optimal for action prediction, even when the value estimates themselves are reliable. This discrepancy suggests that the information required to \emph{evaluate} a state differs from that required to \emph{act} upon it to reach the goal.


%



\subsection{Value Prediction $\neq$ Action Prediction: A 1D Thought Experiment}
\label{subsec:1d_thought_experiment}
To illustrate why a value-preserving goal encoder can be structurally ill-suited for action selection, we analyze a minimal thought experiment on the integer line, adopting the setting from the motivating example in \citet{(HIQL)Park2023}.

\textbf{Setup.} 
Consider a deterministic MDP where $\mathcal{S}=\mathcal{G}=\mathbb{Z}$ and actions $\mathcal{A}=\{+1,-1\}$ shift the state by $a$. Under the shortest-path reward $r(s,g)=-\mathbf{1}[s\neq g]$, the optimal value depends purely on the distance:
\begin{equation}
    V^\star(s,g) = -\frac{1-\gamma^{|s-g|}}{1-\gamma}. \nonumber
    \label{eq:vstar_intline}
\end{equation}
\textbf{Representations.} 
We compare two distinct goal representations
:
\begin{align}
    \phi_{\text{sign}}(s,g) &= s - g, && (\text{Preserves direction}) \nonumber \\
    \phi_{\text{dist}}(s,g) &= |s - g|. && (\text{Preserves distance only}) \nonumber
\end{align}
We defer the justification for allowing $\phi$ to depend on $(s, g)$ rather than solely on $g$ to Appendix~\ref{app:additional_discussions}. Note that both representations allow recovering $V^\star$ exactly (\ie, $V^\star$ is a function of $|s-g|$).

\textbf{The conflict.} 
While $\phi_{\text{dist}}$ is perfect for value prediction, it fails for control. Consider a state $s$ and two opposite goals:
\begin{center}
\begin{tabular}{l}
    $g^+ = s+1 \implies a^\star = +1$. \\
    $g^- = s-1 \implies a^\star = -1$.
\end{tabular}
\end{center}
Here, $\phi_{\text{dist}}$ maps both goals to the same embedding: $\phi_{\text{dist}}(s, g^+) = \phi_{\text{dist}}(s, g^-) = 1$. Consequently, a policy conditioning on $\phi_{\text{dist}}$ receives identical inputs for distinct optimal actions, making optimal control impossible. In contrast, $\phi_{\text{sign}}$ distinguishes these cases ($-1$ vs $+1$), enabling the policy to solve the task.

\textbf{Key message.} \emph{A representation can retain all information needed to recover the optimal value while still collapsing distinct goals that require different optimal actions.} This motivates analyzing goal representations through the lens of action-oriented perspective, rather than a value-centric one.
\section{Optimal Policy Estimation and Action Sufficiency}
\label{sec:action_sufficiency}
The previous counterexample motivates a fundamental question: \emph{what properties should a goal representation satisfy to support optimal action selection?} We formalize this via the \textbf{conditional KL risk}, measuring the divergence between the optimal policy and a learned predictor restricted to the representation. This risk decomposes into modeling error and representation error, motivating our core definition: a representation is \emph{action-sufficient} if the representation error vanishes, guaranteeing that compression imposes no ceiling on performance.

\subsection{Probabilistic Setup}
Following Section~\ref{sec:prelim}, we treat the state, goal, and optimal action as random variables $S$, $G$, and $A$, taking values in measurable spaces $\mathcal{S}$, $\mathcal{G}$, and $\mathcal{A}$, respectively. For notational convenience, we drop the superscript $\star$ from the optimal action $A^\star$. Their joint law $P$ is fully specified, as the joint distribution of $(S, G)$ is fixed and the conditional distribution of the optimal action given $(S, G)$ is assumed to be well-defined. We consider a deterministic representation $Z := \phi(S, G)$ for a measurable function $\phi$.

We frame control within a hierarchical architecture. The \emph{high-level policy} $\pi^h$ proposes a stochastic subgoal $G_{\text{sub}} \in \mathcal{G}$ given $(S, G)$, inducing a distribution over representations $Z_{\text{sub}} := \phi(S, G_{\text{sub}})$:
\[
    \pi^h(\cdot \mid S, G) \in \mathcal{P}(\mathcal{Z}).
\]
The \emph{low-level policy} $\pi^\ell$ selects actions conditioned on the state and subgoal representation:
\[
    \pi^\ell(\cdot \mid S, Z_{\text{sub}}) \in \mathcal{P}(\mathcal{A}).
\]
At deployment, the low-level policy conditions on representations predicted by the high-level policy. Since subgoals and final goals share the same space, we use the notation interchangeably hereafter.

\subsection{Conditional KL Decomposition}
We analyze the predictability of the low-level policy $\pi^\ell$ induced by the representation $\phi$. To measure the discrepancy between the optimal policy $P(A|S, G)$ and its predictor, we employ the \textbf{conditional KL risk}:
\begin{equation}
    \mathcal R(\pi^\ell;\phi) = \mathbb{E}\left[ D_{\text{KL}}(P(A|S, G) \,\|\, \pi^\ell(A|S, Z)) \right].
    \label{eq:kl_risk}
\end{equation}
Our analysis yields the following decomposition of this risk:
\begin{tcolorbox}[colback=gray!10, colframe=gray!50, arc=3mm, boxrule=1pt, left=1mm, right=1mm, top=1mm, bottom=1mm]
\begin{proposition}[Conditional KL Risk Decomposition]
\label{prop:decomposition}
The risk $\mathcal R(\pi^\ell;\phi)$ decomposes into:
\begin{align*}
    \mathcal R(\pi^\ell;\phi) &= \underbrace{\mathbb{E}\left[ D_{\text{KL}}(P(A|S, Z) \,\|\, \pi^\ell(A|S, Z)) \right]}_{\text{Modeling Error}} \\
    &\quad + \underbrace{I(A; G \mid S, Z)}_{\text{Representation Error}},
\end{align*}
where $I(\cdot;\cdot|\cdot)$ denotes conditional mutual information.
\end{proposition}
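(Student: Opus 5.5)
The plan is to insert the intermediate conditional $P(A\mid S,Z)$ inside the log-ratio and split the expectation into two terms. Because $Z=\phi(S,G)$ is a deterministic measurable function of $(S,G)$, conditioning on $(S,G)$ is the same as conditioning on $(S,G,Z)$. Hence $P(A\mid S,G)=P(A\mid S,G,Z)$ almost surely. All expectations below are under the joint law $P$ of $(S,G,A)$, with $Z$ determined by $(S,G)$.

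\textbf{Step 1.} Write the risk as $\mathcal R(\pi^\ell;\phi)=\mathbb{E}\bigl[\log P(A\mid S,G)-\log \pi^\ell(A\mid S,Z)\bigr]$. Then add and subtract $\log P(A\mid S,Z)$ to obtain
\begin{align*}
\mathcal R(\pi^\ell;\phi)
&= \mathbb{E}\Bigl[\log \frac{P(A\mid S,G,Z)}{P(A\mid S,Z)}\Bigr]
 + \mathbb{E}\Bigl[\log \frac{P(A\mid S,Z)}{\pi^\ell(A\mid S,Z)}\Bigr].
\end{align*}
The first term is by definition $I(A;G\mid S,Z)$, i.e.\ the expected KL divergence between $P(A\mid S,G,Z)$ and $P(A\mid S,Z)$.

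\textbf{Step 2.} For the second term, the integrand depends only on $(A,S,Z)$. Apply the tower property, conditioning on $(S,Z)$: the conditional law of $A$ given $(S,Z)$ is exactly $P(A\mid S,Z)$. So the inner expectation equals $D_{\mathrm{KL}}(P(A\mid S,Z)\,\|\,\pi^\ell(A\mid S,Z))$, which is the modeling error. Summing the two terms gives the claimed identity.

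\textbf{Main obstacle: rigor of the split.} The algebra above is routine; the delicate part is justifying the split in general measurable spaces.
\begin{itemize}
\item \emph{Densities.} I would assume that $\mathcal{A}$ carries a dominating measure so that the conditional densities exist; regular conditional distributions exist on standard Borel spaces.
\item \emph{Absolute continuity.} I would argue $P(\cdot\mid S,G)\ll P(\cdot\mid S,Z)$ almost surely. If $P(A\in B\mid S,Z)=0$, then averaging $P(A\in B\mid S,G)$ over $G$ given $(S,Z)$ gives zero, so $P(A\in B\mid S,G)=0$ almost surely.
\item \emph{Infinite risk.} If $\mathcal R=\infty$, I would handle it separately. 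Both terms are nonnegative (they are expectations of KL divergences), so the sum is well defined in $[0,\infty]$. Computing each term directly as an expected KL, rather than subtracting expectations, avoids $\infty-\infty$ issues; this is the KL chain rule.
\end{itemize}
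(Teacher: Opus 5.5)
Your proposal is correct and follows essentially the same route as the paper. You insert $P(A\mid S,Z)$ into the log-ratio, identify the first piece as $I(A;G\mid S,Z)$ via $P(A\mid S,G)=P(A\mid S,G,Z)$, and evaluate the second piece by the tower property over $(S,Z)$; your observation that $A$ given $(S,Z)$ has law $P(A\mid S,Z)$ is exactly the paper's goal-marginalization lemma $\mathbb{E}[P(a\mid S,G)\mid S,Z]=P(a\mid S,Z)$. Your remarks on absolute continuity and on handling $\mathcal R=\infty$ through the KL chain rule add rigor that the paper's formal split omits.
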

\end{tcolorbox}
Full proofs and details are deferred to Appendix~\ref{app:action_sufficiency_proof}.

\subsection{Action Sufficiency}
Proposition~\ref{prop:decomposition} reveals that optimizing $\pi^\ell$ to minimize Eq.~\eqref{eq:kl_risk} serves only to minimize the modeling error, leaving the second term as an irreducible penalty determined solely by the information loss in $\phi$. This motivates our core definition of \textbf{action sufficiency}:
\begin{tcolorbox}[colback=gray!10, colframe=gray!50, arc=3mm, boxrule=1pt, left=1mm, right=1mm, top=1mm, bottom=1mm]
\begin{definition}[Action-Sufficient Representation]\label{def:action-sufficiency}
A representation $Z = \phi(S, G)$ is \emph{action-sufficient} if:
\[
    \Delta_A := I(A; G \mid S, Z) = 0.
\]
\end{definition}
\end{tcolorbox}
This condition is equivalent to the conditional independence $A \perp G \mid (S, Z)$, or almost surely matching the posterior laws: $P(\cdot | S, G) = P(\cdot | S, Z)$. Intuitively, action sufficiency implies that $Z$ retains all goal information requisite for optimal action prediction.

This definition is not arbitrary; it arises from the fundamental conditional KL gap incurred when approximating the goal-conditioned action posterior $P(\cdot\mid S,G)$ using a policy restricted to conditioning on $(S,Z)$, with $\Delta_A$ capturing the resulting irreducible residual. If $\Delta_A>0$, no policy $\pi^\ell(\cdot\mid S,Z)$ can drive the risk in Eq.~\eqref{eq:kl_risk} to zero, even with infinite data and perfect optimization.

\section{Value Sufficient Representation Is Not Action Sufficient}
\label{sec:value_sufficiency}
In the previous section, we established the concept of \textbf{action sufficiency} ($\Delta_A = 0$). Here, we investigate why representations trained solely to predict values, which is a common practice in hierarchical methods for GCRL \cite{(HIQL)Park2023,(OTA)Ahn2025,(DualGoal)Park2025}, often fail to achieve this standard. 
We provide detailed proofs for the results presented in this section in Appendix~\ref{app:value_sufficiency_proof}.

\subsection{Optimal Goal-Conditioned Value Function and Value Sufficiency}
First, we formalize the value-based perspective. With a discount factor $\gamma \in (0, 1]$ and a goal-conditioned reward function $r(s, g)$, the \emph{optimal goal-conditioned value function} is defined as:
\begin{equation*}
    V^\star(s, g) = \sup_\pi \mathbb{E}\left[\sum_{t \geq 0} \gamma^t r(S_t, g) \middle | S_0 = s \right]
\end{equation*}
where the supremum is taken over all possible policies. Following the notation convention established in the definition of action sufficiency, we drop the superscript $\star$ and treat the value as a random variable $V := V(S, G)$.

Analogous to action sufficiency, we define the value sufficiency of a representation $\phi$ as follows:
\begin{tcolorbox}[colback=gray!10, colframe=gray!50, arc=3mm, boxrule=1pt, left=1mm, right=1mm, top=1mm, bottom=1mm]
\begin{definition}[Value-Sufficient Representation]\label{def:value-sufficiency}
A representation $Z=\phi(S,G)$ is said to be \emph{value-sufficient} if:
\begin{equation}
    \Delta_V := I(V; G \mid S, Z) = 0
\end{equation}
\end{definition}
\end{tcolorbox}
Since $V$ is deterministic with respect to $(S, G)$, value sufficiency is equivalent to stating that $V$ can be expressed as a function of $(S, Z)$, \ie, there exists a measurable $\tilde v$ such that $V=\tilde v(S, Z)$.

\subsection{Value Sufficiency Does Not Guarantee Action Sufficiency}

While value sufficiency ensures accurate value prediction, it does not necessarily guarantee action sufficiency ($\Delta_A = 0$). The following proposition provides an exact decomposition of the action sufficiency gap when a representation is value-sufficient.

\begin{tcolorbox}[colback=gray!10, colframe=gray!50, arc=3mm, boxrule=1pt, left=1mm, right=1mm, top=1mm, bottom=1mm]
\begin{proposition}[Action Information Decomposition]
\label{prop:action_info_decomp}
If a representation $Z = \phi(S, G)$ is value-sufficient (\ie, $\Delta_V = 0$), then the action sufficiency gap $\Delta_A$ decomposes as:
\begin{equation}
    \Delta_A = I(A; G \mid S, V) - I(A; Z \mid S, V)
\end{equation}
\end{proposition}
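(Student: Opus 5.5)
The plan is to prove the identity with the chain rule for conditional mutual information, using the two deterministic relations available: $Z=\phi(S,G)$ is a function of $(S,G)$, and, by value sufficiency, $V=\tilde v(S,Z)$ is a function of $(S,Z)$ and hence also of $(S,G)$. From these, conditioning on $(S,Z)$ is the same as conditioning on $(S,Z,V)$, and conditioning on $(S,G)$ is the same as conditioning on $(S,G,V)$ or $(S,G,Z,V)$. In particular $\Delta_A = I(A;G\mid S,Z) = I(A;G\mid S,Z,V)$, because adding a function of the existing conditioning variables does not change the conditional law.

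Next, expand $I(A;G,Z\mid S,V)$ by the chain rule in two orders:
\begin{align*}
I(A; G, Z \mid S, V) &= I(A; Z \mid S, V) + I(A; G \mid S, V, Z), \\
I(A; G, Z \mid S, V) &= I(A; G \mid S, V) + I(A; Z \mid S, V, G).
\end{align*}
The last term vanishes: $Z$ is a deterministic function of $(S,G)$, so it is constant given $(S,V,G)$. Equating the two expansions gives $I(A;Z\mid S,V) + I(A;G\mid S,V,Z) = I(A;G\mid S,V)$.

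By the first step, $I(A;G\mid S,V,Z)=\Delta_A$, which yields $\Delta_A = I(A;G\mid S,V) - I(A;Z\mid S,V)$. The subtraction is legitimate whenever $I(A;G\mid S,V)<\infty$, which holds for instance when $\mathcal{A}$ is finite. In general I would state the identity in the additive form, or note that the right-hand side is interpreted under this finiteness assumption.

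The main obstacle is measure-theoretic rigor in general spaces. One needs the chain rule for conditional mutual information and the fact that conditioning on an additional measurable function of the conditioning variables leaves conditional mutual information unchanged. Both hold via regular conditional distributions, or via the Radon--Nikodym definition of conditional mutual information, on standard Borel spaces. I would invoke these as standard facts, for example from the general information-theory literature, and assume standard Borel spaces so that the conditional laws $P(A\mid S,G)$ and $P(A\mid S,Z)$ used in the definitions are well-defined.
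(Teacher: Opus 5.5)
Your proposal is correct and follows essentially the same route as the paper. The paper first proves the general identity $I(A;G\mid S,Z) = I(A;G\mid S,V) - I(A;Z\mid S,V) + I(A;V\mid S,Z)$ by the same two-order chain-rule expansion of $I(A;Z,G\mid S,V)$, then drops $I(A;V\mid S,Z)$ using $V=\tilde v(S,Z)$; you apply value sufficiency up front to write $\Delta_A = I(A;G\mid S,Z,V)$, which is the same step in a different order, and your finiteness caveat for the subtraction is a point the paper leaves implicit.
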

\end{tcolorbox}

This proposition reveals a critical insight: even if $Z$ preserves the optimal value perfectly, it can still be action-insufficient unless it retains
\emph{action-discriminative information among goals that share the same value}.

Conditioning on the value $V$ partitions the goal space into \emph{value level sets},
$\{g : V^\star(s,g)=v\}$.
Within a fixed value level set, goals are indistinguishable in terms of the optimal value, yet can require
different optimal actions (\eg, goals $N$ steps away in opposite directions). This mismatch is captured by the
residual action-relevant goal information $I(A;G\mid S,V)$, \ie, the action uncertainty that remains even when the
scalar value is known.

The extent to which a representation $Z$ resolves this \emph{within-level-set ambiguity} is quantified by
$I(A;Z\mid S,V)$: it measures how much of the residual uncertainty $I(A;G\mid S,V)$ is explained by $Z$ beyond the
value signal. At the extreme, if $I(A;Z\mid S,V)=0$, then the action gap reduces to
\begin{equation*}
    \Delta_A = I(A;G\mid S,V),
\end{equation*}
meaning that perfect value sufficiency yields no reduction in action-relevant goal information beyond what is already captured by $V$.

This phenomenon explains the failure observed in the 1D example in Section~\ref{sec:motivation}. In that example, there existed distinct goals $g^+$ and $g^-$ that shared the same optimal value but required different optimal actions. A representation that failed to distinguish these goals (\eg, $\phi_\text{dist}$) resulted in $I(A; Z \mid S, V) \approx 0$, creating a structural indistinguishability that prevented optimal control.

More generally, in many non-trivial environments, $I(A; G \mid S, V)$ is strictly positive. Conditioning on the optimal value alone typically does not resolve goal-dependent action ambiguity. We defer a concrete and plausible sufficient condition guaranteeing $I(A; G \mid S, V) > 0$ to Appendix~\ref{app:variance_condition}.

In the subsequent section, we empirically demonstrate that even in discrete environments, there exist near-worst-case scenarios where a representation achieves near-perfect value reconstruction ($\Delta_V \approx 0$) while capturing almost no distinct action information ($I(A; Z \mid S, V ) \approx 0$). This leads to an irreducible control gap where $\Delta_A \approx I(A; G \mid S, V) > 0$.

\begin{takeaway*}{}{}
    We propose \textbf{\emph{action sufficiency}} ($I(A; G \mid S, Z) = 0$) as a critical prerequisite for optimal control. We prove that \textbf{\emph{value sufficiency does not imply action sufficiency}}: a representation can perfectly predict values yet fail to predict actions.
\end{takeaway*}
\section{Discrete Cube: Exact Information-Theoretic Analysis}
\label{sec:toy_cube}
The preceding theoretical analysis established action sufficiency as a fundamental requirement for low-level control. We specifically highlighted that for value-sufficient representations, the conditional mutual information $I(A; Z \mid S, V)$ emerges as the key quantity governing action sufficiency, representing the information necessary to resolve goal-dependent action ambiguity that persists within value level sets. To empirically validate these findings, we now turn to a controlled environment that admits exact information-theoretic computation.
\begin{figure}[t]
  \centering
  \includegraphics[width=.6\linewidth]{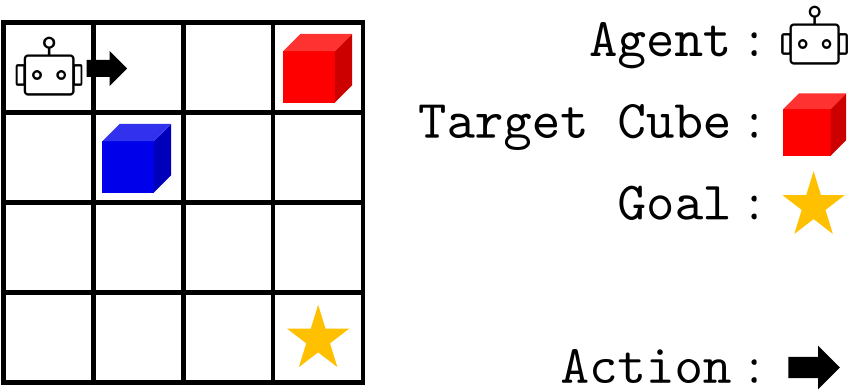}
  \caption{\textbf{The Discrete Cube envorinment.} This domain mirrors the structure of pick-and-place tasks while allowing for the exact computation of information-theoretic quantities. We use this tractability to rigorously verify that control success depends on preserving action sufficiency rather than value sufficiency.}
  \vspace{-0.05in}
  \label{fig:discrete_cube}
\end{figure}
\subsection{Environment Setup: The Discrete Cube}
\label{sec:toy_setting}
We construct the \emph{Discrete Cube} (Figure~\ref{fig:discrete_cube}), a $4\times 4$ grid environment that captures the structural essence of goal-conditioned manipulation. The domain features an agent and two distinct cubes (Red and Blue). The state $s$ consists of the agent's coordinates, cube positions, and the gripper status (empty or holding a specific cube). The action space $\mathcal{A}$ comprises four directional movements and two manipulation primitives (\texttt{pick}, \texttt{place}).

A goal $g$ specifies a target cube and its desired spatial coordinate. Success is achieved when the target cube reaches the goal position and is released. Crucially, the finite nature of this domain allows us to compute the \emph{exact} optimal value function $V^\star(s,g)$ and the optimal policy $P^\star(a|s, g)$ via breath-first search (BFS) and derive precise conditional mutual information quantities without estimation error. 

\subsection{Metrics and Control Evaluation}
\label{sec:toy_metrics}
For a fixed representation $Z=\phi(S,G)$, we compute a set of evaluation quantities using the exact environment dynamics: the sufficiency gaps, the within-level-set disambiguation term, and the control success rate.

\textbf{Sufficiency gaps.} We quantify divergence from sufficiency by the conditional mutual informations
$\Delta_A = I(A;G\mid S,Z)$ for action prediction and $\Delta_V = I(V;G\mid S,Z)$ for value prediction.

\textbf{Within-level-set disambiguation.} We measure how much $Z$ helps distinguish optimal actions among state--goal pairs with identical values via
$I(A;Z\mid S,V)$.

\textbf{Control success rate.} For each goal representation $\phi$, we evaluate the success rate of the mixed policy $\pi_\phi(a\mid s,z)=\mathbb{E}[P^\star(a\mid s,G)\mid S=s, Z=z]$,
which acts by marginalizing the optimal action distribution over the posterior goal distribution induced by $Z$.

To systematically investigate the interplay between these metrics, we construct a diverse library of goal representations and randomly sample $\phi$ from this library. We provide full details on this section, including how we construct the representation library, in Appendix~\ref{app:toy_details}.

\begin{figure}[t]
  \centering
  \includegraphics[width=\linewidth]{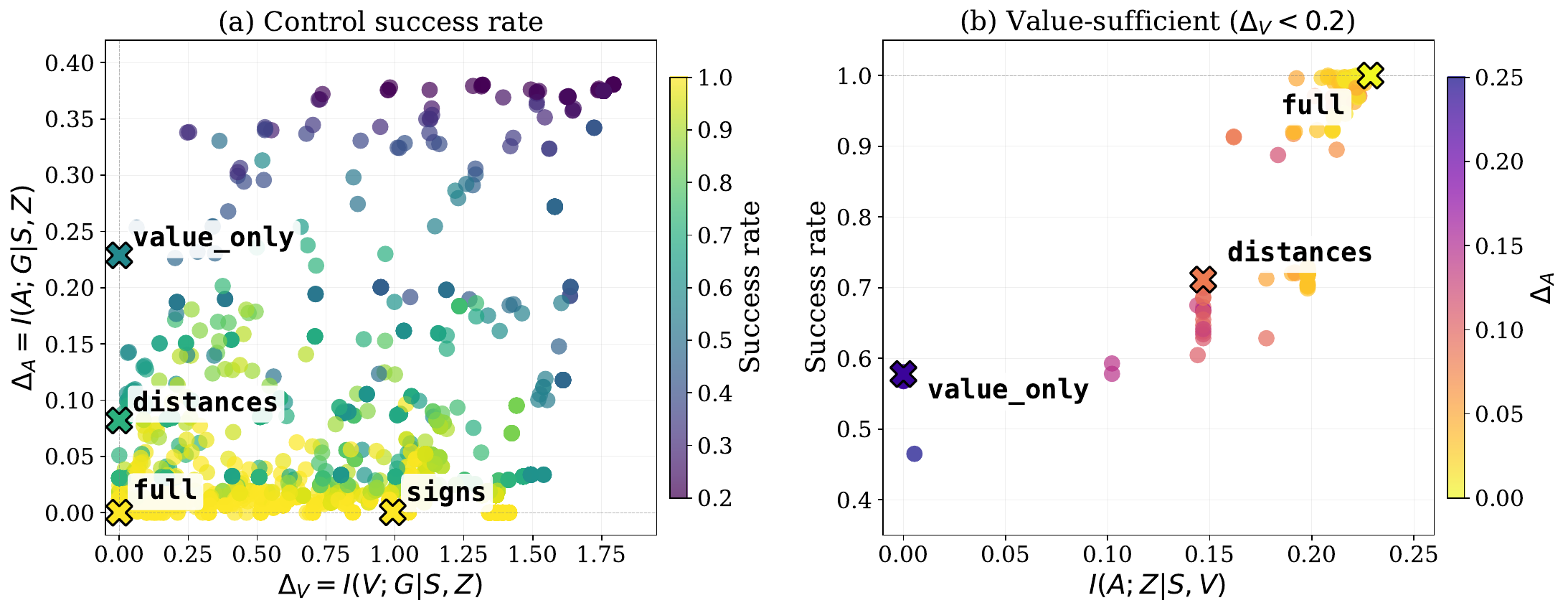}
  \caption{\textbf{Discrete Cube Result.}
    \textbf{(a)} Control success rate plotted over the $(\Delta_V,\Delta_A)$ plane for representations $\phi$, evaluated using the mixed policy $\pi_\phi(a \mid s, z)$.
    \textbf{(b)} For (near-) value-sufficient representations ($\Delta_V<0.2$), control success rate plotted against $I(A;Z\mid S,V)$, with points colored by $\Delta_A$.}
  \label{fig:toy_two_panels}
\end{figure}

\subsection{Results}
\label{sec:toy_results}
To facilitate interpretation of the results, we first introduce four representative baselines that anchor the extremes of action and value information: (1) \texttt{full}, which preserves the full relative-position information and is both action- and value-sufficient by construction; (2) \texttt{signs}, which keeps only the signs of the displacement components (directional information) while discarding magnitudes; (3) \texttt{value\_only}, which collapses $Z$ to the scalar $V^\star$, making it perfectly value-sufficient yet highly action-insufficient; and (4) \texttt{distances}, which encodes per-coordinate goal distances, remaining value-sufficient while discarding the directional information required to select actions.

\textbf{$\Delta_A$ and $\Delta_V$ are not structurally coupled.}
Figure~\ref{fig:toy_two_panels}(a) places each representation in the $(\Delta_V,\Delta_A)$ plane and colors points by the success rate under $\pi_\phi$. Before turning to performance, the result illustrates that the two information losses $\Delta_A$ and $\Delta_V$ capture different aspects and need not move together. Importantly, we are \emph{not} asserting that $\Delta_A$ and $\Delta_V$ are uncorrelated as random variables: when the representation $\phi$ is treated as random, $\Delta_A(\phi)$ and $\Delta_V(\phi)$ are induced random variables, and their distributions highly depend on the prior over $\phi$ (here, the representation library and sampling procedure). Since any finite library cannot cover the distribution over all possible representation functions, our goal is not to characterize these distributions in general. Rather, our claim is \emph{structural}: driving $\Delta_V$ down does not, in general, force $\Delta_A$ down. A representation may preserve action-relevant structure while discarding value-relevant details (small $\Delta_A$ but large $\Delta_V$), or preserve value-determining signals while collapsing action-distinguishing information (small $\Delta_V$ but large $\Delta_A$). This experiment makes this decoupling explicit by exhibiting representations that realize each regime.

\textbf{Success aligns with the action-sufficiency gap.}
Across the representation library, success correlates much more strongly with $\Delta_A$ than with $\Delta_V$: representations that retain action-relevant goal information (small $\Delta_A$) tend to succeed even when they discard value-relevant details, as illustrated by \texttt{signs}. Conversely, value-preserving representations can still fail when they collapse action distinctions: \texttt{value\_only} retains $V^\star$ but cannot disambiguate goals that share the same value, while \texttt{distances} improves by preserving goal distances yet remains limited without directional cues. In contrast, \texttt{full} succeeds by preserving both.

\textbf{For value-sufficient $\phi$, $I(A;Z\mid S,V)$ captures the remaining failure mode.}
Figure~\ref{fig:toy_two_panels}(b) focuses on representations that are (near-) value-sufficient ($\Delta_V<0.2$) and plots success rates against $I(A;Z\mid S,V)$, which measures how much $Z$ disambiguates optimal actions within $(S,V)$ level sets. Within this approximately value-sufficient regime, the remaining performance differences are explained by within-level-set action ambiguity, consistent with the trends observed for the baselines discussed above.

Overall, Discrete Cube provides an exact demonstration that value preservation alone does not guarantee successful control: success under the mixed policy $\pi_\phi$ tracks the action gap $\Delta_A$, and among (near-) value-sufficient representations it is further governed by the within-level-set disambiguation term $I(A;Z\mid S,V)$.

\section{Method: Actor-Based Goal Representations and Approximate Action Sufficiency}
\label{sec:actor_rep}
The preceding sections demonstrate that value-sufficient representations can fail for low-level control by discarding action-relevant information within value level sets. This motivates learning a goal representation that is directly aligned with action prediction.

\subsection{Training vs.\ Deployment in Hierarchical Control}
\label{sec:actor_rep_train_eval}
A key subtlety in hierarchical policy architecture is that the context available to the low-level policy differs between training and deployment.
In a deployed hierarchical policy, the low-level policy is not assumed to have direct access to the ground-truth subgoal it must pursue. This is because, in many tasks, the specific intermediate subgoal (or its ground-truth representation) is not provided externally; instead, the low-level policy acts only on the current state and the subgoal representation predicted by the high-level policy.

During training, however, the availability of the full trajectory allows us to derive the target subgoal $G_\mathrm{sub}$. Consequently, the low-level policy can condition on $G_\mathrm{sub}$ (or effectively $G$, as subgoals reside in the same space as final goals) during the learning phase. This discrepancy is crucial: \emph{It allows us to learn a goal encoder from the full goal $G$ during training and use the same representation space to condition the low-level policy on subgoals at deployment.}

\subsection{Actor-Based Goal Representations}
\label{sec:actor_rep_method}

We capture the training-time access to $G$ by structuring the full architecture of the low-level policy as:
\begin{equation}
    \pi_\theta(\cdot \mid S, Z), \qquad Z = \phi(S,G),
    \label{eq:actor_param}
\end{equation}
where $\phi$ is a goal encoder learned jointly with the policy head parameters $\theta$. 
We train $(\theta,\phi)$ jointly by minimizing the negative log-likelihood (NLL) loss
\footnote{For ease of exposition, we assume expert data sampled from the optimal goal-conditioned action law $P(A\mid S,G)$. While direct access to such an oracle is unavailable, this is approximated in practice by AWR \eqref{eq:awr} objectives that concentrate probability mass on near-optimal actions.}:
\begin{equation}
    \mathcal L_{\mathrm{act}}(\theta,\phi)
    := \mathbb{E}\!\left[-\log \pi_\theta\!\left(A \mid S, \phi(S,G)\right)\right].
    \label{eq:actor_nll}
\end{equation}
Importantly, the low-level policy accesses the full goal $G$ \textbf{only during training} to learn $(\theta, \phi)$. At deployment, we detach the encoder $\phi$, and the policy $\pi_\theta$ conditions directly on a subgoal representation $Z_{\mathrm{sub}}$ predicted by the high-level policy, rather than on the true final goal. 

As discussed in Section~\ref{sec:motivation}, we refer to the encoder learned via this policy optimization objective as an \emph{actor-based goal representation} $\phi_A$. In contrast, standard Hierarchical methods for GCRL \cite{(HIQL)Park2023,(DualGoal)Park2025} typically learn \emph{value-based representations} $\phi_V$ via value estimation objectives (\eg, temporal difference learning).

\subsection{Near-Optimal NLL Implies Approximate Action Sufficiency}
\label{sec:actor_rep_theory}
We now show that learning a low-level policy formulated in this way through minimizing actor NLL naturally induces action-sufficient $\phi$. Detailed proofs are provided in Appendix~\ref{app:actor_representations_proof}.

In Section~\ref{sec:action_sufficiency}, we defined the conditional KL risk in Eq.~\eqref{eq:kl_risk} as the fundamental quantity to minimize for obtaining the optimum low-level policy. The following lemma connects this theoretical risk to our tractable training objective.

\begin{tcolorbox}[colback=gray!10, colframe=gray!50, arc=3mm, boxrule=1pt, left=1mm, right=1mm, top=1mm, bottom=1mm]
\begin{lemma}[Decomposition of Actor NLL]
\label{lem:actor_nll_kl}
For any $(\theta,\phi)$ with $Z=\phi(S,G)$, the actor NLL $\mathcal L_{\mathrm{act}}(\theta, \phi)$ decomposes as:
\begin{equation}
    \mathcal L_{\mathrm{act}}(\theta,\phi) = H(A\mid S,G) + \mathcal R(\pi_\theta;\phi),
    \label{eq:actor_nll_decomp}
\end{equation}
in which $\mathcal R(\pi_\theta;\phi)$ is the conditional KL risk defined in Eq.~\eqref{eq:kl_risk}.
\end{lemma}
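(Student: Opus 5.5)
The plan is to add and subtract $\log P(A\mid S,G)$ inside the expectation defining $\mathcal L_{\mathrm{act}}(\theta,\phi)$ in Eq.~\eqref{eq:actor_nll}, and then recognize each piece. Since $Z=\phi(S,G)$ is a deterministic function of $(S,G)$, the policy $\pi_\theta(\cdot\mid S,\phi(S,G))$ is itself a conditional distribution over $\mathcal A$ given $(S,G)$. This lets us compare it pointwise with the optimal action law $P(\cdot\mid S,G)$. Concretely, we will write
\begin{align*}
-\log \pi_\theta(A\mid S,Z) &= -\log P(A\mid S,G) + \log\frac{P(A\mid S,G)}{\pi_\theta(A\mid S,Z)},
\end{align*}
and take expectations under the joint law $P$ of $(S,G,A)$.

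The first term has expectation $\mathbb E[-\log P(A\mid S,G)] = H(A\mid S,G)$ by definition of conditional entropy. We use the discrete form; in the continuous case it is conditional differential entropy with densities relative to a common base measure on $\mathcal A$. For the second term we apply the tower property, conditioning on $(S,G)$. Given $(S,G)=(s,g)$, the variable $A$ is distributed as $P(\cdot\mid s,g)$, and $z=\phi(s,g)$ is fixed. The inner conditional expectation is therefore exactly $D_{\mathrm{KL}}\bigl(P(\cdot\mid s,g)\,\|\,\pi_\theta(\cdot\mid s,\phi(s,g))\bigr)$. Taking the outer expectation over $(S,G)$ gives $\mathcal R(\pi_\theta;\phi)$ as defined in Eq.~\eqref{eq:kl_risk}. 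Summing the two terms yields Eq.~\eqref{eq:actor_nll_decomp}.

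The main obstacle is not the algebra but justifying the split of expectations without incurring $\infty-\infty$. We will assume $H(A\mid S,G)<\infty$, which holds automatically for finite $\mathcal A$. Under that assumption the decomposition holds in $[0,\infty]$ form: if $\mathcal R=\infty$ then $\mathcal L_{\mathrm{act}}=\infty$ as well, because the KL integrand's negative part is controlled by the finite entropy term. We will also note that absolute continuity, $P(\cdot\mid s,g)\ll\pi_\theta(\cdot\mid s,z)$, is needed for finiteness, and that $\mathcal R=\infty$ otherwise.

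Finally, as a remark, combining this lemma with Proposition~\ref{prop:decomposition} gives
\[
\mathcal L_{\mathrm{act}} = H(A\mid S,G) + \text{Modeling Error} + \Delta_A.
\]
This sets up the subsequent argument that a near-optimal NLL forces $\Delta_A$ to be small.
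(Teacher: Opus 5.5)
Your proposal is correct and takes the same route as the paper: both rewrite $\mathcal L_{\mathrm{act}}(\theta,\phi)-H(A\mid S,G)$ as the expected log-ratio $\log\bigl(P(A\mid S,G)/\pi_\theta(A\mid S,Z)\bigr)$, then condition on $(S,G)$ (where $Z=\phi(S,G)$ is fixed) to recognize the pointwise KL and hence $\mathcal R(\pi_\theta;\phi)$. Your treatment of $\infty-\infty$ is a refinement the paper leaves implicit, with one small correction: the negative part of the log-ratio is bounded on its own (its conditional expectation is at most $\int(\pi_\theta-P)^+\le 1$), and finiteness of $H(A\mid S,G)$ is what licenses splitting the expectation, rather than the entropy term controlling that negative part.
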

\end{tcolorbox}

Crucially, the entropy term $H(A|S,G)$ is determined by the joint law $P
$ established in Section~\ref{sec:action_sufficiency} and is irreducible with respect to optimization. Therefore, minimizing the actor NLL $\mathcal L_{\mathrm{act}}(\theta,\phi)$ is equivalent to minimizing the conditional KL risk $\mathcal R(\pi_\theta;\phi)$.
By combining Lemma~\ref{lem:actor_nll_kl} with the risk decomposition in Proposition~\ref{prop:decomposition}, we obtain our main theoretical result: \emph{achieving near-optimal Actor NLL guarantees that the learned representation is approximately action-sufficient}.

\begin{tcolorbox}[colback=gray!10, colframe=gray!50, arc=3mm, boxrule=1pt, left=1mm, right=1mm, top=1mm, bottom=1mm]
\begin{theorem}[Near-Optimal Actor NLL Implies Approximate Action Sufficiency]
\label{thm:near_opt_nll_action_suff}
For any $(\theta,\phi)$ with $Z=\phi(S,G)$, the action sufficiency gap $\Delta_A=I(A;G\mid S,Z)$ is bounded by the excess risk:
\begin{equation}
    \Delta_A \le \mathcal L_{\mathrm{act}}(\theta,\phi) - H(A\mid S,G).
\label{eq:nll_implies_action_gap}
\end{equation}
In particular, if the actor NLL is near-optimal such that $\mathcal L_{\mathrm{act}}(\theta,\phi) \le H(A\mid S,G) + \varepsilon$, then
\begin{equation}
    \Delta_A \le \varepsilon,
    \label{eq:eps_action_sufficient}
\end{equation}
\ie, $Z$ is $\varepsilon$-approximately action-sufficient.
\end{theorem}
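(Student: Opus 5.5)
The plan is to chain Lemma~\ref{lem:actor_nll_kl} with Proposition~\ref{prop:decomposition} and then discard a nonnegative term. First, by Lemma~\ref{lem:actor_nll_kl}, for any $(\theta,\phi)$ with $Z=\phi(S,G)$ we have $\mathcal L_{\mathrm{act}}(\theta,\phi) - H(A\mid S,G) = \mathcal R(\pi_\theta;\phi)$. Here I will assume that $H(A\mid S,G)$ is finite, for instance when $\mathcal A$ is discrete with finite entropy; otherwise the subtraction is not meaningful, and the statement should be read with the excess risk defined directly as $\mathcal R$. Second, Proposition~\ref{prop:decomposition} applied with $\pi^\ell=\pi_\theta$ gives
\begin{equation*}
\mathcal R(\pi_\theta;\phi) = \mathbb{E}\left[ D_{\text{KL}}(P(A\mid S,Z) \,\|\, \pi_\theta(A\mid S,Z)) \right] + I(A;G\mid S,Z).
\end{equation*}

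The key step is to observe that the modeling-error term is an expectation of KL divergences. Each divergence is nonnegative by Gibbs' inequality, taken pointwise for every realization of $(S,Z)$, so the expectation is nonnegative as well. Dropping this term yields $\Delta_A = I(A;G\mid S,Z) \le \mathcal R(\pi_\theta;\phi) = \mathcal L_{\mathrm{act}}(\theta,\phi) - H(A\mid S,G)$, which is \eqref{eq:nll_implies_action_gap}.

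The $\varepsilon$-statement then follows immediately. If $\mathcal L_{\mathrm{act}} \le H(A\mid S,G)+\varepsilon$, the right-hand side is at most $\varepsilon$, so $\Delta_A\le\varepsilon$.

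The only delicate point is measure-theoretic bookkeeping in the general (possibly continuous) setting. One must check that the regular conditional law $P(A\mid S,Z)$ exists, and that when the risk is infinite the inequality holds trivially, so the addition of the two terms involves no $\infty-\infty$ ambiguity. Since both summands in the decomposition are nonnegative, the sum is well defined in $[0,\infty]$ and the argument goes through. Beyond that, the proof is a two-line consequence of the earlier results.
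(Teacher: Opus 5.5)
Your proposal is correct and matches the paper's proof: you use Lemma~\ref{lem:actor_nll_kl} to identify the excess NLL with $\mathcal R(\pi_\theta;\phi)$, apply Proposition~\ref{prop:decomposition}, and drop the nonnegative modeling-error term. Your caveat that $H(A\mid S,G)$ must be finite, and your handling of the $[0,\infty]$ case, are sensible bookkeeping that the paper leaves implicit.
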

\end{tcolorbox}

\begin{table*}[t]
\caption{Success rate comparison of offline GCRL algorithms across state-based OGBench tasks. Results are averaged across 8 seeds.}
\label{tab:rl-results-state}
\centering
\small
\setlength{\tabcolsep}{8pt}
\begin{tabular}{c c c c c c c c c}
\toprule
& & \multicolumn{2}{c}{\texttt{cube-double}} & \multicolumn{2}{c}{\texttt{cube-triple}} & \multicolumn{2}{c}{\texttt{cube-quadruple}} & \texttt{scene} \\
\cmidrule(lr){3-4} \cmidrule(lr){5-6} \cmidrule(lr){7-8} \cmidrule(lr){9-9}
\textbf{Method} & \textbf{Goal Rep.} & \texttt{play} & \texttt{noisy} & \texttt{play} & \texttt{noisy} & \texttt{play} & \texttt{noisy} & \texttt{play} \\
\midrule
HIQL$^\text{OTA}$  & Value Rep.\ ($\phi_V$) &
0.26 & 0.25 &
0.16 & 0.09 &
0.01 & 0.02 &
0.56 \\
HIQL$^\text{OTA}$  & Actor Rep.\ ($\phi_A$) &
0.42 & 0.29 &
0.41 & 0.19 &
\textbf{0.28} & 0.08 &
0.78 \\
\addlinespace
H--Flow$^\text{OTA}$  & Value Rep.\ ($\phi_V$) &
0.26 & 0.28 &
0.15 & 0.10 &
0.01 & 0.01 &
0.69 \\
H--Flow$^\text{OTA}$  & Actor Rep.\ ($\phi_A$) &
0.43 & \textbf{0.33} &
\textbf{0.45} & \textbf{0.23} &
0.25 & \textbf{0.11} &
\textbf{0.81} \\
\midrule
GCIQL &
-- &
\textbf{0.69} & 0.13 &
0.30 & 0.14 &
0.01 & 0.00 &
0.53 \\
GCIVL &
-- &
0.36 & 0.20 &
0.08 & 0.14 &
0.00 & 0.00 &
0.54 \\
\bottomrule
\end{tabular}
\end{table*}

\begin{table}[t]
\caption{Success rate comparison on pixel-based OGBench tasks. Results are averaged over 3 seeds.}
\label{tab:rl-results-visual}
\centering
\small
\setlength{\tabcolsep}{2pt}
\begin{tabular}{c c c c}
\toprule
\textbf{Method} & \textbf{Goal Rep.} & \texttt{visual-cube} & \texttt{visual-scene} \\
\midrule
HIQL$^\text{OTA}$ & Value Rep.\ ($\phi_V$) & 0.05 & 0.23 \\
HIQL$^\text{OTA}$ & Actor Rep.\ ($\phi_A$) & \textbf{0.53} & \textbf{0.53} \\
\midrule
GCIQL & -- & 0.01 & 0.10 \\
GCIVL & -- & 0.13 & 0.36 \\
\bottomrule
\end{tabular}
\end{table}


\begin{table}[t]
\caption{Comparison of success rates across different goal representation choices on \texttt{cube} tasks.}
\label{tab:no_rep}
\centering
\small
\setlength{\tabcolsep}{5pt}
\resizebox{\columnwidth}{!}{
\begin{tabular}{l c c c c}
\toprule
 & \multicolumn{2}{c}{\texttt{cube-double}} & \multicolumn{2}{c}{\texttt{cube-triple}} \\
\cmidrule(lr){2-3} \cmidrule(lr){4-5}
\textbf{Goal Rep.} & \texttt{play} & \texttt{noisy} & \texttt{play} & \texttt{noisy} \\
\midrule
No Rep.                        & 0.27 & 0.24 & 0.23 & 0.06 \\
Autoencoder Rep.               & 0.35 & --   & 0.24 & --   \\
Value Rep. ($\phi_V$)          & 0.26 & 0.25 & 0.16 & 0.10 \\
Actor Rep. ($\phi_A$) & \textbf{0.42} & \textbf{0.29} & \textbf{0.41} & \textbf{0.19} \\
\bottomrule
\end{tabular}
}
\end{table}

\subsection{Theoretical Interpretation and Practical Implications}
\label{sec:interpretation}

Theorem~\ref{thm:near_opt_nll_action_suff} implies that standard NLL minimization serves a dual purpose: it optimizes the low-level policy while simultaneously forcing the representation $\phi$ to retain all goal information necessary for action prediction.
In particular, assuming the parameterized policy class in Eq.~\eqref{eq:actor_param} has sufficient expressivity to approximate the optimal kernel $P(A\mid S,G)$, minimizing actor NLL drives the excess risk to zero, thereby ensuring $\Delta_A \to 0$.

While grounded in rigorous analysis, the proposed method is very simple in practice: we attach a goal encoder $\phi(S,G)$ directly to the input of the low-level policy and jointly minimize the actor NLL.

Our analysis establishes actor-based representations as a principled alternative for accurate low-level action prediction. By retaining the structural information necessary for action prediction, this approach avoids the potential information loss of value-based representations.

\section{Experiment}\label{sec:experiment}
We investigate the impact of various representations by integrating them into HIQL~\cite{(HIQL)Park2023}, a versatile hierarchical framework for GCRL, using \textbf{OGBench}~\cite{(OGBench)Park2025} as our testbed---a challenging benchmark for offline GCRL spanning both state-based and pixel-based observations. We compare against standard offline GCRL baselines GCIQL and GCIVL~\cite{(IQL)Kostrikov}, and report success rates averaged over multiple random seeds.

\textbf{Main results.}
Tables~\ref{tab:rl-results-state} and~\ref{tab:rl-results-visual} show that substituting the value-based representation ($\phi_V$) with our actor-based representation ($\phi_A$) yields consistent and substantial performance gains across all environments. This gap is especially stark in tasks where value-based representations fail entirely (\eg, \texttt{cube-quadruple}). Furthermore, to address the limited expressivity of unimodal Gaussian policies used in standard HIQL, we additionally employ flow matching~\cite{(EDP)Kang2023, (FQL)Park2025} as a generative backbone for policies (Hierarchical Flow, H--Flow$^\text{OTA}$), which further improves results by handling the multimodal action distributions required for these tasks. The advantage of actor-based representations becomes most pronounced under pixel-based observations (Table~\ref{tab:rl-results-visual}), where the abundance of nuisance information makes it more challenging to preserve action-relevant features through value learning alone.

\textbf{Necessity of goal representation.} The most straightforward approach is to condition the policy directly on the raw subgoal $s_{t+k}$ without a learned encoder (No Rep.\ in Table~\ref{tab:no_rep}). While this theoretically guarantees maximum action sufficiency by preserving all information, Table~\ref{tab:no_rep} shows it is empirically suboptimal. As noted by \citet{(HIQL)Park2023}, high-dimensional raw goals contain nuisance information that severely hinders high-level subgoal prediction. An autoencoder-based representation, which jointly trains an encoder to compress $(s,g)$ into a low-dimensional latent $z$ and a decoder to reconstruct $g$ from $(s, z)$, only partially mitigates this issue: by forcing exact goal reconstruction, the latent $z$ still retains nuisance information that hinders high-level prediction. While value-based representations compress the noise, they ultimately degrade performance by discarding action-relevant features. In contrast, our actor-based representation resolves this trade-off by filtering out nuisance while strictly preserving the structure necessary for accurate low-level action.

\section{Concluding Remarks}
\label{sec:conclusion}
We formally defined \textbf{action sufficiency,} identifying it as a critical requirement for goal representations for hierarchical approaches in offline GCRL. We demonstrated that widely used value-based representations often discard essential action information, whereas our proposed \textbf{actor-based goal representations} are provably approximately action-sufficient. Empirical results on both the discrete domain and complex manipulation tasks confirm that satisfying this property significantly enhances hierarchical control performance.

Despite these positive results, several limitations point to important directions for future work.

\textbf{Gap between theory and practice.} Our theoretical analysis assumes access to the true distribution, whereas offline learning relies on static datasets. Consequently, limited data coverage may widen the gap between theoretical guarantees and practical action sufficiency.

\textbf{Sufficiency vs.\ predictability.} While action sufficiency is necessary for the low-level policy, it is not sufficient for the entire hierarchy. As the goal representation acts as an interface between planning and control, it must also be compact and predictable for the high-level policy, as shown in Table~\ref{tab:no_rep}. Future work should explore balancing action-relevant information with the compressibility required for efficient high-level planning.

\newpage
\section*{Impact Statement}
This paper presents work whose goal is to advance the field of machine learning, specifically goal representation learning in hierarchical offline reinforcement learning with primary applications in robotic manipulation. Our contribution is methodological and does not introduce capabilities beyond those addressed by existing hierarchical reinforcement learning methods. There are many potential societal consequences of our work, none of which we feel must be specifically highlighted here.

\section*{Acknowledgments}
This work was supported in part by the National Research Foundation of Korea (NRF) grant
[No. RS-2025-02263628], the Institute of Information \& communications Technology Planning
\& Evaluation (IITP) grants [RS-2021-II212068, RS-2022-II220113, RS-2022-II220959, RS-2021-II211343], and the BK21 FOUR Education and
Research Program for Future ICT Pioneers (Seoul
National University), funded by the Korean government. It was also supported by AOARD Grant No. FA2386-23-1-4079.

\bibliography{bibfile}
\bibliographystyle{icml2026}

\newpage
\appendix
\onecolumn
\section*{Appendix}
\section{Hierarchical Implicit Q-Learning (HIQL) \& Temporal Abstraction.} \label{app:hiql}
A central challenge in GCRL is accurately estimating value functions for distant goals, which is crucial for solving long-horizon and temporally extended tasks
\cite{(MSS)Huang2019, (HIGL)Kim2021, (HIQL)Park2023}.
To mitigate this difficulty, HIQL \cite{(HIQL)Park2023} introduces a hierarchical policy framework built upon a value function learned via Implicit Q-Learning (IQL) \cite{(IQL)Kostrikov}.
This hierarchical structure allows the agent to select meaningful actions even when value estimates for faraway goals are inaccurate or noisy. However, as deeply studied in \cite{(HorizonReduction)Park2025,(OTA)Ahn2025}, the value learning scheme in IQL failed to capture the long-horizon reward signal effectively. To solve this problem, \cite{(HorizonReduction)Park2025,(OTA)Ahn2025} adopt temporal abstraction scheme which updates the value over $n$ step target to decrease the effective horizon from start state $S$ to goal $g$.

Specifically, HIQL with temporal abstraction learns a goal-conditioned state-value function $V$ by minimizing the following expectile regression objective:
\begin{align}
    &\mathcal{L}(V) = \mathbb{E}_{(s,s_n) \sim \mathcal{D},\, g \sim p(g)}
    \left[
    L_2^{\tau} \left( r(s_n,g) + \gamma \bar{V}(s_n,g) - V(s,g) \right)
    \right], \label{app:iql_value_loss}
\end{align}
where the expectile loss is defined as
$L_2^{\tau}(u) = |\tau - \mathbf{1}(u < 0)|u^2$ with $\tau > 0.5$, and $\bar{V}$ denotes a target value network.\footnote{Due to the overestimation issues inherent in IQL, we assume deterministic environment dynamics in this work.}

Unlike prior studies
\cite{(HER)Andrychowicz2017, (RIS)chane2021, (QRL)Wang2023, (HIQL)Park2023, (MSCP)Wu2024},
we instead adopt an option-aware sparse goal-reaching reward \cite{(OTA)Ahn2025, (HorizonReduction)Park2025} of the form
$r(s_n,g) = -\mathbf{1}\{s_n \neq g\}$, where $s_n$ denotes $n$ step forward state from the state $s$.
Under this reward structure, the magnitude of the optimal value function $|V^{\star}(s,g)|$ corresponds to the \emph{discounted temporal distance with temporal abstraction}, representing discounted estimate of the minimum number of environment steps required to reach goal $g$ from state $s$ with temporal abstraction.

HIQL decouples policy extraction into two hierarchical components.
The high-level policy $\pi^{h}(s_{t+k} \mid s_t, g)$ proposes a $k$-step subgoal that guides progress toward the target goal, while the low-level policy $\pi^{\ell}(a_t \mid s_t, s_{t+k})$ generates primitive actions to reach the proposed subgoal.
Both policies are learned using advantage-weighted regression (AWR)
\cite{(MARWIL)Wang2018, (AWR)Peng2019, (AWAC)Nair2020}, with objectives given by
\begin{align}
    \mathcal{J}(\pi^h) &= 
    \mathbb{E}_{(s_t,s_{t+k},g)\sim \mathcal{D}}
    \left[
    \alpha^{h} \cdot
    \log \pi^h(s_{t+k} \mid s_t,g)
    \right], \label{app:high_actor_loss} \\
    \mathcal{J}(\pi^\ell) &= 
    \mathbb{E}_{(s_t,a_t,s_{t+1},s_{t+k})\sim \mathcal{D}}
    \left[
    \alpha^{\ell} \cdot
    \log \pi^\ell(a_t \mid s_t,s_{t+k})
    \right], \label{app:low_actor_loss}
\end{align}
where $\alpha^{h} = \exp\left( \beta^h A^h(s_t,s_{t+k},g) \right)$, $\alpha^{\ell}=\exp\left( \beta^\ell A^{\ell}(s_t,s_{t+1},s_{t+k}) \right)$, $\beta^h$ and $\beta^\ell$ are inverse temperature parameters.
The high-level advantage is defined as
$A^h(s_t,s_{t+k},g) = V^{h}(s_{t+k},g) - V^{h}(s_t,g)$,
and the low-level advantage as
$A^{\ell}(s_t,s_{t+1},s_{t+k}) = V^{\ell}(s_{t+1},s_{t+k}) - V^{\ell}(s_t,s_{t+k})$. Note that, since the high level policy $\pi^{h}$ should plan the subgoal properly when the distance between $s$ and $g$ is quite large, and the low level policy $\pi^{\ell}$ only works on short-horizon scenario, we adopt large $n$ for $V^h$ and $n=1$ for $V^{\ell}$.


\section{Additional Discussions}
\label{app:additional_discussions}

\subsection{Justification for State-Dependent Representation}
\label{app:state_dependence}

In Section~\ref{sec:action_sufficiency}, we defined the goal representation as a function of both the current state and the goal, denoted by $Z = \phi(S, G)$. While goal-conditioned reinforcement learning often utilizes goal-only encoders \citep[\ie, $Z = \psi(G)$; see, \eg,][]{(DualGoal)Park2025}, we argue that the state-dependent formulation is a more general and architecturally natural choice for hierarchical control.

We provide a detailed justification for this design choice based on three perspectives: mathematical subsumption, theoretical invariance, and architectural consistency.

\subsubsection{Mathematical Subsumption}
First and foremost, the set of state-dependent encoders strictly subsumes the set of goal-only encoders. A representation that depends solely on the goal is simply a special case of our formulation where the dependence on $S$ is trivial.

Formally, let $\Psi$ be the set of measurable functions mapping $\mathcal{G} \to \mathcal{Z}$. For any goal-only encoder $\psi \in \Psi$, we can construct an equivalent state-dependent encoder $\phi$ as follows:
\[
    \phi(S, G) := (\psi \circ \Pi_{\mathcal{G}})(S, G),
\]
where $\Pi_{\mathcal{G}}: \mathcal{S} \times \mathcal{G} \to \mathcal{G}$ is the standard canonical projection defined by $\Pi_{\mathcal{G}}(s, g) = g$. 

This implies that our analysis of $\phi(S, G)$ entails no loss of generality; any result derived for the state-dependent case naturally applies to goal-only encoders by restricting the function class. By allowing $\phi$ to access $S$, we simply expand the hypothesis space for the representation, potentially allowing for more compact encodings (\eg, relative goal coordinates) that are impossible with $\psi(G)$ alone.

\subsubsection{Invariance of Action Sufficiency}
One might ask whether introducing $S$ into $\phi$ complicates the definition of action sufficiency. We emphasize that our core definition is agnostic to the internal structure of $\phi$.

Recall the definition of action sufficiency:
\[
    I(A; G \mid S, Z) = 0.
\]
This condition requires that $Z$, \textit{in conjunction with} $S$, allows for the optimal prediction of $A$. Even if $Z$ were derived solely from $G$ (\ie, $Z=\psi(G)$), the conditioning on $S$ in the mutual information term remains present. Therefore, the theoretical validity of our decomposition and the resulting sufficiency condition holds regardless of whether the encoder explicitly takes $S$ as input.

\subsubsection{Architectural Consistency in Hierarchical Methods}
Finally, from a practical standpoint, the state-dependent formulation aligns naturally with the information structure of Hierarchical methods for GCRL. In this framework, all major components inherently possess access to the current state $S$:
\begin{itemize}
    \item The \textbf{High-Level Policy} $\pi^h(G_{\text{sub}} \mid S, G)$ observes $S$ to decide which subgoal is reachable.
    \item The \textbf{Low-Level Policy} $\pi^\ell(A \mid S, Z)$ observes $S$ to execute immediate actions.
    \item The \textbf{Value Function} $V(S, G)$ (or $V(S, Z)$) observes $S$ to estimate expected returns.
\end{itemize}
Since the policy side already conditions on $S$, ``contextualizing" the encoder side with $S$ creates a symmetric information structure. For instance, in navigation tasks, encoding a goal as a ``relative vector from the current state" (which requires access to both $S$ and $G$) is often far more efficient and robust than encoding absolute coordinates. Our formulation $\phi(S, G)$ formally accommodates such relative representations, whereas a rigid $\psi(G)$ formulation would exclude them.

\subsection{Limitations of Goal-Only Value-Based Representations}
\label{app:dual_rep}

Prior work, \eg, \citet{(HIQL)Park2023, (DualGoal)Park2025}, studies goal representations of the form \(Z=\psi(G)\) constructed from value information, arguing that such value-based representations can be sufficient for predicting goal-conditioned behavior. Here, we analyze the limitations of this approach, specifically showing that under reasonable assumptions, strict value sufficiency for goal-only representations forces the retention of \emph{all} goal information, including nuisance factors.

\subsubsection{Assumptions and Definitions}
We assume the value target \(V(s,g)\) uniquely identifies the goal when the agent is at the goal state.

\begin{assumption}[Zero if and only if identity]
\label{assump:zero-iff}
For all \(s,g\in\mathcal S=\mathcal G\),
\[
V(s,g)=0\quad \Longleftrightarrow\quad s=g.
\]
\end{assumption}

Previous works effectively utilize a condition slightly stronger than our information-theoretic value sufficiency, which we formalize as \emph{strict value sufficiency}.

\begin{definition}[Strict value sufficiency]
\label{def:strict-value-suff}
A goal representation \(Z=\psi(G)\) is \emph{strictly value-sufficient} if there exists a measurable function \(\tilde v\) such that
\[
V(s,g)=\tilde v\bigl(s,\psi(g)\bigr)\qquad \text{for all }(s,g)\in\mathcal S\times \mathcal G.
\]
\end{definition}

\subsubsection{Injectivity and Information Equivalence}
Under Assumption~\ref{assump:zero-iff}, strict value sufficiency implies that \(\psi\) must be injective. This result formalizes the intuition that if two distinct goals map to the same representation, their value functions must be identical everywhere, leading to a contradiction at the goal states.

\subsubsection{Injectivity and Information Equivalence}

We first establish a general measure-theoretic lemma stating that an injective representation allows for the perfect reconstruction of the original variable, thereby generating the same $\sigma$-algebra.

\begin{lemma}[Injectivity implies information equivalence]
\label{lem:injective-sigma}
Let \(Z=\psi(G)\) where \(\psi: \mathcal{G} \to \mathcal{Z}\) is a measurable function between standard Borel spaces. If \(\psi\) is injective, then
\[
\sigma(G) = \sigma(Z) \quad \text{(modulo null sets)},
\]
where \(\sigma(X)\) denotes the \(\sigma\)-algebra generated by the random variable \(X\).
\end{lemma}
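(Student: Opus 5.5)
The key tool is the Lusin--Souslin theorem: an injective Borel map between standard Borel spaces sends Borel sets to Borel sets. With it, both inclusions of $\sigma$-algebras follow, and in fact they hold exactly, not just modulo null sets.

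\emph{Easy inclusion.} Since $Z=\psi(G)$ with $\psi$ measurable, each generator of $\sigma(Z)$ has the form
\[
Z^{-1}(B)=G^{-1}\bigl(\psi^{-1}(B)\bigr),
\]
and $\psi^{-1}(B)$ is Borel in $\mathcal G$. Hence $\sigma(Z)\subseteq\sigma(G)$.

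\emph{Reverse inclusion.} Take a generator $G^{-1}(A)$ of $\sigma(G)$, where $A\subseteq\mathcal G$ is Borel. I want to write it as $Z^{-1}(B)$ for some Borel $B\subseteq\mathcal Z$. The natural candidate is $B:=\psi(A)$. By injectivity, $\psi^{-1}(\psi(A))=A$, so
\[
Z^{-1}(\psi(A))=G^{-1}\bigl(\psi^{-1}(\psi(A))\bigr)=G^{-1}(A).
\]
This gives $\sigma(G)\subseteq\sigma(Z)$, provided $\psi(A)$ is Borel.

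\emph{Main obstacle.} The whole argument hinges on $\psi(A)$ being Borel. Images of Borel sets under merely measurable maps are in general only analytic, so this is not automatic. This is exactly where the standard Borel hypothesis is used: Lusin--Souslin applies to the injective Borel map $\psi$, so $\psi(A)$ is Borel.

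\emph{Fallback if one wishes to avoid Lusin--Souslin.} The image $\psi(A)$ is analytic, hence universally measurable. One can then take a Borel set $B$ that differs from $\psi(A)$ by a $P_Z$-null set, which gives $G^{-1}(A)=Z^{-1}(B)$ up to a $P$-null set. This yields the ``modulo null sets'' form stated in the lemma.

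\emph{Alternative via a measurable inverse.} Equivalently, one can construct the inverse directly. The image $\psi(\mathcal G)$ is Borel, and the inverse $\psi^{-1}:\psi(\mathcal G)\to\mathcal G$ is Borel measurable. Extending it arbitrarily but measurably to all of $\mathcal Z$ gives a map $h$ with $G=h(Z)$ everywhere. Then $\sigma(G)\subseteq\sigma(Z)$ follows, again by Lusin--Souslin.

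I would present the measurable-inverse version, since it is the form that feeds into the later argument. There, the strict value sufficiency established for $\psi$ (via its injectivity) is turned into the identity $I(\,\cdot\,;G\mid\cdot)=I(\,\cdot\,;Z\mid\cdot)$.
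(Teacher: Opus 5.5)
Your proof is correct and follows the paper's route. The paper also takes the trivial inclusion $\sigma(Z)\subseteq\sigma(G)$ and then uses the Lusin--Souslin fact (an injective Borel map between standard Borel spaces has Borel images and a Borel inverse on its range) to recover $G$ from $Z$; you rightly note this gives equality exactly, not only modulo null sets. Only your closing aside is off: in the paper, injectivity of $\psi$ is derived from strict value sufficiency (via the zero-iff-identity assumption on $V$), not the other way round, and the lemma is then used to conclude $\sigma(G)=\sigma(Z)$ rather than a mutual-information identity.
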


\begin{proof}
Since \(Z\) is a function of \(G\), the inclusion \(\sigma(Z) \subseteq \sigma(G)\) holds trivially. Conversely, for standard Borel spaces, an injective measurable function maps measurable sets to measurable sets and possesses a measurable inverse on its range. Thus, \(G\) can be recovered from \(Z\) almost surely via the inverse map \(\psi^{-1}\), implying \(\sigma(G) \subseteq \sigma(Z)\). Therefore, the two \(\sigma\)-algebras coincide.
\end{proof}

Using this lemma, we now show that under strict value sufficiency, the representation must preserve all goal information. The proof proceeds by first showing that strict value sufficiency forces \(\psi\) to be injective.

\begin{proposition}[Strict value sufficiency implies \(\sigma(G)=\sigma(Z)\)]
\label{prop:strict-suff-sigma}
Assume \(Z = \psi(G)\) is strictly value-sufficient. Then
\[
\sigma(G)=\sigma(Z)\qquad \text{(modulo null sets)}.
\]
\end{proposition}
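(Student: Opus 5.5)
The plan is to show first that strict value sufficiency forces $\psi$ to be injective, and then to apply Lemma~\ref{lem:injective-sigma} to obtain $\sigma(G)=\sigma(Z)$.

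\textbf{Step 1: injectivity.} Suppose $g_1,g_2\in\mathcal G$ satisfy $\psi(g_1)=\psi(g_2)$. Strict value sufficiency (Definition~\ref{def:strict-value-suff}) gives, for every $s\in\mathcal S$,
\[
V(s,g_1)=\tilde v\bigl(s,\psi(g_1)\bigr)=\tilde v\bigl(s,\psi(g_2)\bigr)=V(s,g_2).
\]
Because the identity holds pointwise for \emph{all} $(s,g)$, no almost-sure qualification is needed. Since $\mathcal S=\mathcal G$, we may evaluate at $s=g_1$. Assumption~\ref{assump:zero-iff} gives $V(g_1,g_1)=0$, so $V(g_1,g_2)=0$. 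Applying the assumption in the other direction then yields $g_1=g_2$. Hence $\psi$ is injective.

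\textbf{Step 2: information equivalence.} With injectivity established, Lemma~\ref{lem:injective-sigma} applies directly, provided $\mathcal G$ and $\mathcal Z$ are standard Borel and $\psi$ is measurable. I will state these standing assumptions explicitly, in line with the lemma's hypotheses. The lemma gives $\sigma(Z)\subseteq\sigma(G)$ trivially. For the reverse inclusion, the Lusin--Souslin theorem says the range $\psi(\mathcal G)$ is Borel and $\psi^{-1}$ is measurable on it, so $G=\psi^{-1}(Z)$ almost surely.

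\textbf{Main obstacle.} Step 1 is elementary. The only delicate point is measure-theoretic and lives in Step 2: the inverse must be measurable, which is exactly where the standard Borel hypothesis enters. I will therefore rely on Lemma~\ref{lem:injective-sigma} rather than reprove it.

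It is also worth noting why Step 1 goes through. The argument needs strict, everywhere value sufficiency together with evaluation at the specific state $s=g_1$. The information-theoretic version $\Delta_V=0$ would only give the identity almost surely and could miss this null set, which is why the proposition is stated under Definition~\ref{def:strict-value-suff}.
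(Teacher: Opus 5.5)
Your proposal is correct and follows the paper's proof essentially step for step. You establish injectivity of $\psi$ by evaluating the everywhere-valid identity $V(s,g_1)=V(s,g_2)$ at $s=g_1$ and applying the zero-if-and-only-if-identity assumption in both directions, then invoke the injectivity lemma for $\sigma(G)=\sigma(Z)$; the paper argues by contradiction where you argue directly, which is immaterial.
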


\begin{proof}
First, we show that \(\psi\) is injective on \(\mathcal G\). Suppose for the sake of contradiction that there exist distinct goals \(g_1 \neq g_2\) such that \(\psi(g_1) = \psi(g_2)\). By strict value sufficiency (Definition~\ref{def:strict-value-suff}), the value function must satisfy:
\[
V(s, g_1) = \tilde{v}(s, \psi(g_1)) = \tilde{v}(s, \psi(g_2)) = V(s, g_2)
\]
for all states \(s \in \mathcal{S}\). Evaluating this at \(s = g_1\), we have \(V(g_1, g_1) = V(g_1, g_2)\). By Assumption~\ref{assump:zero-iff}, \(V(g_1, g_1) = 0\), which implies \(V(g_1, g_2) = 0\). Applying Assumption~\ref{assump:zero-iff} again to the right-hand side necessitates \(g_1 = g_2\), which contradicts our assumption that \(g_1 \neq g_2\).

Thus, \(\psi\) must be injective. Applying Lemma~\ref{lem:injective-sigma}, we conclude that \(\sigma(G) = \sigma(Z)\).
\end{proof}

\subsubsection{Discussion}
Proposition~\ref{prop:strict-suff-sigma} demonstrates that under strict value sufficiency, a goal-only representation \(Z=\psi(G)\) must retain \emph{exactly} the same information as the original goal \(G\). This is often undesirable because high-dimensional goals typically contain nuisance information that is irrelevant for control. Forcing the representation to preserve all goal information contradicts the principle of abstraction in hierarchical learning, where the goal representation should ideally act as a compressed interface passing only relevant information to the low-level controller. 

Unlike this rigid requirement, our proposed framework isolates \emph{action-relevant} information via action sufficiency, allowing for many-to-one mappings that discard nuisance factors while preserving the control-relevant content necessary for optimal policy execution.
\section{Proofs for Section~\ref{sec:action_sufficiency}}
\label{app:action_sufficiency_proof}

In this section, we provide the formal derivations for the conditional KL risk decomposition (Proposition~\ref{prop:decomposition}) and discuss the theoretical implications of action sufficiency.

\subsection{Goal-Marginalization Lemma}
\label{lem:goal_marginalization}

Before proving the main decomposition, we establish a useful lemma characterizing the relationship between the optimal policy conditioned on the raw goal $G$ and the optimal policy conditioned on the representation $Z$. This justifies viewing $P(\cdot \mid S, Z)$ as the posterior mixture of the expert policy.

\begin{lemma}[Goal-Marginalization]
Let $Z = \phi(S, G)$ be a deterministic representation. The conditional distribution of the optimal action $A$ given $(S, Z)$ satisfies the following marginalization property:
\begin{equation}
    P(A \in B \mid S, Z) = \mathbb{E}\left[ P(A \in B \mid S, G) \mid S, Z \right]
    \label{eq:marginalization}
\end{equation}
for any measurable set $B \subseteq \mathcal{A}$.
\end{lemma}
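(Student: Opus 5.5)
The plan is to obtain the identity directly from the tower property of conditional expectation, using that $Z$ is a measurable function of $(S,G)$. Fix a measurable $B\subseteq\mathcal A$ and write $P(A\in B\mid S,Z)=\mathbb E[\mathbf 1\{A\in B\}\mid S,Z]$ and $P(A\in B\mid S,G)=\mathbb E[\mathbf 1\{A\in B\}\mid S,G]$, where both are understood as versions of conditional expectations. When regular conditional distributions exist, for example on standard Borel spaces as assumed elsewhere in the appendix, these versions can be chosen to be kernels evaluated at the conditioning variables.

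The key structural step is the inclusion of $\sigma$-algebras $\sigma(S,Z)\subseteq\sigma(S,G)$. It holds because $(S,Z)=(S,\phi(S,G))$ is the image of $(S,G)$ under the measurable map $(s,g)\mapsto(s,\phi(s,g))$. With this coarser/finer relationship in place, the tower property gives
\[
\mathbb E\bigl[\mathbb E[\mathbf 1\{A\in B\}\mid S,G]\,\big|\,S,Z\bigr]=\mathbb E[\mathbf 1\{A\in B\}\mid S,Z]\quad\text{a.s.},
\]
which is exactly Eq.~\eqref{eq:marginalization}.

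To be self-contained rather than merely citing the tower property, I would verify the defining property of conditional expectation. Take any $C\in\sigma(S,Z)$. Then $C\in\sigma(S,G)$, so
\[
\mathbb E\bigl[\mathbf 1_C\,P(A\in B\mid S,G)\bigr]=\mathbb E\bigl[\mathbf 1_C\mathbf 1\{A\in B\}\bigr].
\]
This equality shows that $\mathbb E[P(A\in B\mid S,G)\mid S,Z]$ integrates like $\mathbf 1\{A\in B\}$ over every set in $\sigma(S,Z)$. Almost-sure uniqueness of conditional expectation then gives the claim.

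The only delicate point is the meaning of ``$P(\cdot\mid S,G)$'' as a kernel rather than an a.s.-defined object. If one wants the resulting kernel $P(\cdot\mid S,Z)$ to be a genuine probability measure for every $(s,z)$, and not just a.s. for each fixed $B$, I would invoke existence of regular conditional distributions on standard Borel $\mathcal A$. From there the mixture $\int P(\cdot\mid s,g)\,P(dg\mid s,z)$ defines a valid kernel agreeing a.s. with the above for every $B$, via a $\pi$–$\lambda$ argument over a countable generating class. This measure-theoretic bookkeeping is the main obstacle, though it is routine.
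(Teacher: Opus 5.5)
Your proof is correct and takes the same route as the paper: the tower property, justified by the fact that $Z$ is $\sigma(S,G)$-measurable. The paper phrases this as inserting $G$ into the conditioning (since $\sigma(S,G,Z)=\sigma(S,G)$); you state $\sigma(S,Z)\subseteq\sigma(S,G)$ directly and additionally check the defining property of conditional expectation, and your regular-conditional-distribution remarks add rigor that the almost-sure identity does not require.
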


\begin{proof}
By the tower property of conditional expectation and noting that $Z$ is $\sigma(S, G)$-measurable (since $Z=\phi(S, G)$), we have:
\begin{align*}
    \mathbb{E}[\mathbf{1}\{A \in B\} \mid S, Z] 
    &= \mathbb{E}\Big[ \mathbb{E}[\mathbf{1}\{A \in B\} \mid S, G, Z] \;\Big|\; S, Z \Big] \\
    &= \mathbb{E}\Big[ \mathbb{E}[\mathbf{1}\{A \in B\} \mid S, G] \;\Big|\; S, Z \Big].
\end{align*}
The second equality holds because conditioning on $(S, G)$ fully determines $Z$, making $Z$ redundant in the inner expectation (i.e., $P(\cdot \mid S, G, Z) = P(\cdot \mid S, G)$). The left-hand side is $P(A \in B \mid S, Z)$ and the inner term on the right-hand side is $P(A \in B \mid S, G)$, which proves Eq.~\eqref{eq:marginalization}. 
\end{proof}

\subsection{Proof of the Conditional KL Risk Decomposition}

We now provide the proof for the decomposition of the conditional KL risk. We first recall the statement of the proposition:

\begin{proposition}[Conditional KL Risk Decomposition]
\label{prop:decomposition_app}
The risk $\mathcal R(\pi^\ell;\phi)$ decomposes into a modeling error term and a representation error term:
\begin{align*}
    \mathcal R(\pi^\ell;\phi) &= \underbrace{\mathbb{E}\left[ D_{\text{KL}}(P(A|S, Z) \,\|\, \pi^\ell(A|S, Z)) \right]}_{\text{Modeling Error}} \\
    &\quad + \underbrace{I(A; G \mid S, Z)}_{\text{Representation Error}},
\end{align*}
where $I(\cdot;\cdot|\cdot)$ denotes conditional mutual information.
\end{proposition}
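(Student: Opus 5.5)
The plan is a single log-ratio insertion followed by the chain rule. Write the risk as an expectation over the joint law of $(S,G,A)$:
\[
\mathcal R(\pi^\ell;\phi)=\mathbb{E}\!\left[\log\frac{P(A\mid S,G)}{\pi^\ell(A\mid S,Z)}\right].
\]
Inside the logarithm, multiply and divide by the conditional law $P(A\mid S,Z)$. This splits the integrand into two pieces:
\[
\log\frac{P(A\mid S,G)}{P(A\mid S,Z)}+\log\frac{P(A\mid S,Z)}{\pi^\ell(A\mid S,Z)} .
\]
The conditional law $P(\cdot\mid S,Z)$ is well defined by the Goal-Marginalization Lemma, which identifies it with $\mathbb{E}[P(\cdot\mid S,G)\mid S,Z]$.

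I would then identify each piece separately.

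\textbf{First term.} Because $Z=\phi(S,G)$ is $\sigma(S,G)$-measurable, we have $P(A\mid S,G)=P(A\mid S,G,Z)$. Hence the expectation of the first log-ratio is exactly
\[
\mathbb{E}\!\left[\log\frac{P(A\mid S,G,Z)}{P(A\mid S,Z)}\right]=I(A;G\mid S,Z),
\]
by the definition of conditional mutual information. Equivalently, it equals $H(A\mid S,Z)-H(A\mid S,G)$ when the entropies are finite.

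\textbf{Second term.} The integrand depends only on $(A,S,Z)$. So I would use the tower property: condition on $(S,Z)$ and integrate $A$ against $P(\cdot\mid S,Z)$. This turns the inner expectation into $D_{\mathrm{KL}}(P(\cdot\mid S,Z)\,\|\,\pi^\ell(\cdot\mid S,Z))$, and taking the outer expectation gives the modeling error.

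The main obstacle is measure-theoretic rigor rather than algebra. For general (possibly continuous) action spaces, the log-ratios should be written as Radon--Nikodym derivatives with respect to a common dominating measure. The insertion step must also be justified. It requires absolute continuity $P(\cdot\mid S,G)\ll P(\cdot\mid S,Z)$, which holds a.s. because $P(\cdot\mid S,Z)$ is a mixture of $P(\cdot\mid S,G)$ by the Goal-Marginalization Lemma. If $P(\cdot\mid S,Z)\not\ll\pi^\ell$ on a non-null set, both sides equal $+\infty$ and the identity holds trivially.

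It also remains to justify splitting the expectation of the sum into a sum of expectations. I would handle this by noting that the first term is a conditional mutual information, hence nonnegative. The second term, after conditioning on $(S,Z)$, is a KL divergence, also nonnegative. With both pieces nonnegative, the additivity holds in $[0,\infty]$ without integrability issues. In the discrete case, which is relevant for Discrete Cube, all of this reduces to finite sums and the argument is immediate.
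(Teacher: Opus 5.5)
Your proposal is correct and follows essentially the same route as the paper. Both insert $P(A\mid S,Z)$ into the log-ratio and identify the first piece as $I(A;G\mid S,Z)$ via $P(A\mid S,G,Z)=P(A\mid S,G)$; the paper then collapses the second piece by conditioning on $(S,Z)$ and pulling $\mathbb{E}_{G\mid S,Z}$ inside through the Goal-Marginalization Lemma, which is your tower-property step. Your measure-theoretic remarks go beyond the paper, with one refinement: additivity of the expectation follows not from the nonnegativity of the two resulting quantities, since the log-ratios themselves can be negative, but from each log-ratio having a negative part with conditional expectation at most $1/e$ (given $(S,G)$ and $(S,Z)$ respectively), which rules out $\infty-\infty$.
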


\begin{proof}
Recall the definition of the risk from Eq.~\eqref{eq:kl_risk}:
\[
    \mathcal R(\pi^\ell;\phi) = \mathbb{E}\left[ D_{\text{KL}}(P(A|S, G) \,\|\, \pi^\ell(A|S, Z)) \right].
\]
We define the conditional mutual information term using its standard characterization. Since $Z = \phi(S, G)$ is deterministic, $P(A \mid S, G, Z) = P(A \mid S, G)$ almost surely, simplifying the mutual information to:
\begin{equation}
    I(A; G \mid S, Z) = \mathbb{E}\left[ D_{\text{KL}}(P(A \mid S, G) \,\|\, P(A \mid S, Z)) \right].
    \label{eq:cmi_proof_ref}
\end{equation}

Now, consider the pointwise KL divergence inside the expectation of the risk $\mathcal R$. For fixed $(s, g, z)$, we decompose the log-likelihood ratio:
\begin{align*}
    D_{\text{KL}}(P(\cdot|s, g) \,\|\, \pi^\ell(\cdot|s, z)) 
    &= \int P(a|s, g) \log \frac{P(a|s, g)}{\pi^\ell(a|s, z)} \, da \\
    &= \int P(a|s, g) \log \left( \frac{P(a|s, g)}{P(a|s, z)} \cdot \frac{P(a|s, z)}{\pi^\ell(a|s, z)} \right) \, da \\
    &= \underbrace{\int P(a|s, g) \log \frac{P(a|s, g)}{P(a|s, z)} \, da}_{\text{(I)}} 
    \;+\; \underbrace{\int P(a|s, g) \log \frac{P(a|s, z)}{\pi^\ell(a|s, z)} \, da}_{\text{(II)}}.
\end{align*}
We now take the expectation over $(S, G)$ (and implicitly $Z$):

\begin{enumerate}
    \item \textbf{Term (I):} The expectation of the first term corresponds exactly to Eq.~\eqref{eq:cmi_proof_ref}, which is the \textbf{Representation Error} $I(A; G \mid S, Z)$.
    
    \item \textbf{Term (II):} For the second term, we apply the law of iterated expectations, conditioning on $(S, Z)$:
    \begin{align*}
        &\mathbb{E}_{S,G} \left[ \int P(a|S, G) \log \frac{P(a|S, Z)}{\pi^\ell(a|S, Z)} \, da \right] \\
        &= \mathbb{E}_{S,Z} \left[ \mathbb{E}_{G|S,Z} \left[ \int P(a|S, G) \log \frac{P(a|S, Z)}{\pi^\ell(a|S, Z)} \, da \right] \right] \\
        &= \mathbb{E}_{S,Z} \left[ \int \underbrace{\mathbb{E}_{G|S,Z}[P(a|S, G)]}_{P(a|S, Z) \text{ by Lemma~\ref{lem:goal_marginalization}}} \log \frac{P(a|S, Z)}{\pi^\ell(a|S, Z)} \, da \right] \\
        &= \mathbb{E}_{S,Z} \left[ D_{\text{KL}}(P(\cdot|S, Z) \,\|\, \pi^\ell(\cdot|S, Z)) \right].
    \end{align*}
    This yields the \textbf{Modeling Error}.
\end{enumerate}

Combining these two terms completes the proof.
\end{proof}

\subsection{Optimality of Action Sufficiency}
\label{app:optimality_discussion}

The decomposition derived above allows us to formally state the conditions under which the control risk can be minimized to zero.

\begin{corollary}[Zero Minimal KL Risk $\iff$ Action Sufficiency]
For a deterministic representation $Z = \phi(S, G)$, the following statements are equivalent:
\begin{enumerate}
    \item The representation is action-sufficient: $I(A; G \mid S, Z) = 0$.
    \item The minimum achievable risk is zero: $\inf_{\pi^\ell} \mathcal{R}(\pi^\ell;\phi) = 0$.
\end{enumerate}
\end{corollary}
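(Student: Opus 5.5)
The plan is to read both directions directly off the decomposition in Proposition~\ref{prop:decomposition_app}, namely
\[
\mathcal R(\pi^\ell;\phi)=\mathbb{E}\bigl[D_{\text{KL}}(P(A\mid S,Z)\,\|\,\pi^\ell(A\mid S,Z))\bigr]+I(A;G\mid S,Z).
\]
Both terms on the right are nonnegative. The first is an expectation of KL divergences; the second is a conditional mutual information, which by Eq.~\eqref{eq:cmi_proof_ref} is itself an expected KL divergence. The second term does not depend on $\pi^\ell$. Taking the infimum over $\pi^\ell$ should therefore give
\[
\inf_{\pi^\ell}\mathcal R(\pi^\ell;\phi)=I(A;G\mid S,Z)+\inf_{\pi^\ell}\mathbb{E}\bigl[D_{\text{KL}}(P(\cdot\mid S,Z)\,\|\,\pi^\ell(\cdot\mid S,Z))\bigr],
\]
and I will show that the last infimum is exactly zero and is attained.

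\textbf{Attaining the infimum.} I choose $\pi^\ell(\cdot\mid s,z):=P(\cdot\mid s,z)$, a regular version of the conditional law of $A$ given $(S,Z)$. This version exists when $\mathcal A$ is standard Borel, which is the same regularity already used implicitly in Lemma~\ref{lem:goal_marginalization}. With this choice the modeling error is $\mathbb{E}[D_{\text{KL}}(P\,\|\,P)]=0$. Hence $\inf_{\pi^\ell}\mathcal R(\pi^\ell;\phi)=I(A;G\mid S,Z)$, and the infimum is a minimum.

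\textbf{Equivalence.} With the identity $\inf_{\pi^\ell}\mathcal R(\pi^\ell;\phi)=I(A;G\mid S,Z)$ in hand, both directions are immediate. If (1) holds, the minimum risk equals $I(A;G\mid S,Z)=0$, which is (2). If (2) holds, then $I(A;G\mid S,Z)=\inf_{\pi^\ell}\mathcal R=0$, which is (1).

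\textbf{Main obstacle.} The only delicate step is measure-theoretic. I need the kernel $P(\cdot\mid S,Z)$ to be an admissible low-level policy, meaning a measurable kernel of $(s,z)$. I also need the terms in the decomposition to be well defined, so that no $\infty-\infty$ situation arises. To handle this, I will assume either that $\mathcal A$ is finite or standard Borel, or that $H(A\mid S,G)<\infty$. Under that assumption both terms are nonnegative extended reals, and the decomposition holds as an identity in $[0,\infty]$. That is enough for the argument, since both sides of the equivalence only concern whether a quantity equals zero.
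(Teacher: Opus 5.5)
Your proposal is correct and follows essentially the same route as the paper: both use the conditional KL decomposition of Proposition~\ref{prop:decomposition_app}, observe that the modeling error is nonnegative and vanishes at $\pi^\ell(\cdot\mid S,Z)=P(\cdot\mid S,Z)$, conclude $\inf_{\pi^\ell}\mathcal R(\pi^\ell;\phi)=I(A;G\mid S,Z)$, and read off the equivalence. Your measure-theoretic remarks are extra care that the paper leaves implicit rather than a different argument: you require that a regular version of $P(\cdot\mid S,Z)$ be an admissible policy, and that the identity be read in $[0,\infty]$.
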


\begin{proof}
From Proposition~\ref{prop:decomposition_app}, the risk is the sum of a non-negative modeling error and the representation error $\Delta_A = I(A; G \mid S, Z)$.
\[
    \inf_{\pi^\ell} \mathcal{R}(\pi^\ell;\phi) = \inf_{\pi^\ell} \Big( \mathbb{E}\left[ D_{\text{KL}}(P(A|S, Z) \,\|\, \pi^\ell(A|S, Z)) \right] \Big) + \Delta_A.
\]
The modeling error is minimized to 0 if and only if $\pi^\ell(\cdot \mid S, Z) = P(\cdot \mid S, Z)$ almost surely. Thus, the infimum of the risk is exactly $\Delta_A$. Consequently, the risk can be driven to zero if and only if $\Delta_A = 0$.
\end{proof}

\section{Proofs for Section~\ref{sec:value_sufficiency}}
\label{app:value_sufficiency_proof}

In this section, we provide the formal proofs characterizing the limitations of value-based representations, specifically showing how value sufficiency implies functional dependence yet fails to guarantee action sufficiency.

\subsection{Equivalence of Value Sufficiency and Functional Dependence}
\label{app:proof_lem_value_suff}

We first establish that value sufficiency is equivalent to the condition that the optimal value $V$ can be perfectly reconstructed from the representation $(S, Z)$.

\begin{lemma}[Characterization of Value Sufficiency]
\label{lem:value-suff-doob-dynkin}
Assume \(V=v(S,G)\) and \(Z=\phi_V(S,G)\) are deterministic measurable functions of \((S,G)\).
Then the following are equivalent:
\begin{enumerate}
\item[\textnormal{(i)}] \(Z\) is value-sufficient, i.e.\ \(I(V;G\mid S,Z)=0\).
\item[\textnormal{(ii)}] \(V \perp G \mid (S,Z)\).
\item[\textnormal{(iii)}] \(V\) is \(\sigma(S,Z)\)-measurable, i.e.\ there exists a measurable map \(\tilde v\) such that
\begin{equation}
V=\tilde v(S,Z)\quad\text{a.s.}
\end{equation}
\end{enumerate}
\end{lemma}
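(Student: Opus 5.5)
I will prove the cycle (i)$\Leftrightarrow$(ii)$\Rightarrow$(iii)$\Rightarrow$(ii), working throughout with regular conditional distributions. The spaces are standard Borel, which the paper implicitly assumes as in Lemma~\ref{lem:injective-sigma}.

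\textbf{(i)$\Leftrightarrow$(ii).} This is the standard characterization of vanishing conditional mutual information. Since $Z$ is a function of $(S,G)$, the same manipulation as in Eq.~\eqref{eq:cmi_proof_ref} gives
\[
I(V;G\mid S,Z)=\mathbb{E}\bigl[D_{\text{KL}}(P(V\mid S,G)\,\|\,P(V\mid S,Z))\bigr].
\]
The integrand is nonnegative, so the expectation vanishes iff $P(V\mid S,G)=P(V\mid S,Z)$ a.s. Again because $Z$ is $\sigma(S,G)$-measurable, $P(V\mid S,G)=P(V\mid S,G,Z)$, so this equality is exactly $V\perp G\mid(S,Z)$. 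Well-definedness of the KL requires absolute continuity. This is guaranteed by the Goal-Marginalization Lemma, which shows that $P(V\mid S,Z)$ is the posterior mixture of the kernels $P(V\mid S,G)$.

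\textbf{(iii)$\Rightarrow$(ii).} If $V=\tilde v(S,Z)$ a.s., then conditional on $(S,Z)$ the variable $V$ is a.s.\ constant, namely the Dirac mass at $\tilde v(S,Z)$. A degenerate variable is independent of anything given the conditioning, so $V\perp G\mid(S,Z)$ follows.

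\textbf{(ii)$\Rightarrow$(iii).} This is the main step. Since $V=v(S,G)$ is $\sigma(S,G)$-measurable, $P(V\in\cdot\mid S,G)=\delta_{v(S,G)}$ a.s. Assumption (ii) gives $P(V\in\cdot\mid S,Z)=P(V\in\cdot\mid S,G)=\delta_{V}$ a.s. So the regular conditional law of $V$ given $(S,Z)$, which is a $\sigma(S,Z)$-measurable kernel $\kappa(S,Z)$, is almost surely a Dirac measure located at $V$. I would then extract the atom measurably. For real-valued $V$, set $\tilde v(s,z):=\int x\,\kappa(s,z)(dx)$, or use the conditional median to avoid integrability concerns. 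This gives $\tilde v(S,Z)=V$ a.s. with $\tilde v$ measurable, and by Doob--Dynkin this is (iii). Equivalently, one can argue that $\mathbb{E}[\mathbf 1\{V\in B\}\mid S,Z]=\mathbf 1\{V\in B\}$ a.s. for every Borel $B$. Then $\{V\in B\}$ belongs to the completion of $\sigma(S,Z)$, hence $V$ is $\sigma(S,Z)$-measurable modulo null sets, and Doob--Dynkin applies.

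The obstacle is purely measure-theoretic bookkeeping. One must use a countable generating family of Borel sets so that the Dirac identification holds simultaneously for all $B$ off a single null set. One must also work modulo null sets, which is why the statement carries ``a.s.''
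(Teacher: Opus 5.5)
Your proof is correct and follows the paper's route: (i)$\Leftrightarrow$(ii) via the KL form of conditional mutual information, (iii)$\Rightarrow$(ii) because $V$ is degenerate given $(S,Z)$, and (ii)$\Rightarrow$(iii) via $\mathbb{E}[\mathbf{1}\{V\in B\}\mid S,Z]=\mathbf{1}\{V\in B\}$ a.s.\ and Doob--Dynkin; your explicit construction of $\tilde v$ as the median of the Dirac kernel is a slightly cleaner variant that avoids the modulo-null-sets step. One correction: the Goal-Marginalization Lemma does not guarantee $\delta_{V}\ll P(V\in\cdot\mid S,Z)$, which fails whenever that conditional law is non-atomic (e.g.\ $S,Z$ constant and $V=G$ uniform on $[0,1]$), but nothing is lost if you take $D_{\text{KL}}=+\infty$ in that case, since the expectation still vanishes iff the two kernels agree a.s.
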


\begin{proof}
\textbf{(i)\(\iff\)(ii).}
This follows immediately from the standard property of conditional mutual information: \(I(X;Y\mid Z) = 0\) if and only if \(X\) and \(Y\) are conditionally independent given \(Z\). Thus, \(I(V;G\mid S,Z)=0 \iff V \perp G \mid (S,Z)\).

\textbf{(ii)\(\implies\)(iii).}
Let \(f:\mathbb{R}\to\mathbb{R}\) be any bounded measurable function. By the conditional independence assumption \(V \perp G \mid (S,Z)\), we have:
\begin{equation}
\mathbb{E}\big[f(V)\mid S,Z,G\big] = \mathbb{E}\big[f(V)\mid S,Z\big]\quad\text{a.s.}
\label{eq:proof_ci_moment}
\end{equation}
However, since \(V=v(S,G)\) is a deterministic function of \((S,G)\), \(V\) is measurable with respect to \(\sigma(S,G)\). Consequently, conditioning on \((S,G)\) (and thus \((S,Z,G)\) since \(Z\) is determined by \(S,G\)) reveals \(V\) exactly:
\begin{equation}
\mathbb{E}\big[f(V)\mid S,Z,G\big] = f(V)\quad\text{a.s.}
\label{eq:proof_det_reveal}
\end{equation}
Combining \eqref{eq:proof_ci_moment} and \eqref{eq:proof_det_reveal}, we obtain:
\[
f(V) = \mathbb{E}\big[f(V)\mid S,Z\big]\quad\text{a.s.}
\]
Since the right-hand side is \(\sigma(S,Z)\)-measurable, \(f(V)\) is \(\sigma(S,Z)\)-measurable for all bounded measurable \(f\). Choosing \(f\) as the identity (or appropriate indicators) implies \(V\) is \(\sigma(S,Z)\)-measurable. By the Doob--Dynkin lemma, there exists a measurable function \(\tilde v\) such that \(V = \tilde v(S,Z)\) almost surely.

\textbf{(iii)\(\implies\)(ii).}
If \(V = \tilde v(S,Z)\) almost surely, then \(V\) is fully determined by \((S,Z)\). Therefore, for any measurable set \(B\), \(P(V \in B \mid S, Z, G) = \mathbf{1}_{\tilde v(S,Z) \in B} = P(V \in B \mid S, Z)\), which implies \(V \perp G \mid (S,Z)\).
\end{proof}

\subsection{General Decomposition of Action-Relevant Information}
\label{app:proof_prop_exact_decomp}

Here, we derive the general information-theoretic decomposition that relates action-relevant goal information to any intermediate representation $Z$ and the value signal $V$.

\begin{proposition}[Exact Decomposition of Action Information]
\label{prop:exact-decomp-cmi}
Let \(V=v(S,G)\) and \(Z=\phi_V(S,G)\) be deterministic measurable functions of \((S,G)\).
Then the following identity holds:
\begin{equation}
I(A;G\mid S,Z)
=
I(A;G\mid S,V)
\;-\;
I(A;Z\mid S,V)
\;+\;
I(A;V\mid S,Z).
\end{equation}
\end{proposition}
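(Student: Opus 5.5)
The identity follows from two applications of the chain rule for conditional mutual information, both expanding the single quantity $I(A; G, Z, V \mid S)$. Since $Z=\phi_V(S,G)$ and $V=v(S,G)$ are deterministic functions of $(S,G)$, conditioning on $(S,G)$ already determines $(Z,V)$. Consequently $I(A; Z, V \mid S, G)=0$, and $I(A; G, Z, V \mid S) = I(A; G \mid S)$. The plan is to expand $I(A;G\mid S)$ in two ways, once through $Z$ and once through $V$, and then equate them.

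First expansion, through $Z$:
\begin{equation*}
I(A;G\mid S) = I(A; G, Z\mid S) = I(A;Z\mid S) + I(A;G\mid S,Z).
\end{equation*}
Second expansion, through $V$:
\begin{equation*}
I(A;G\mid S) = I(A;V\mid S) + I(A;G\mid S,V).
\end{equation*}
Subtracting gives
\begin{equation*}
I(A;G\mid S,Z) = I(A;G\mid S,V) + I(A;V\mid S) - I(A;Z\mid S).
\end{equation*}
It remains to show that $I(A;V\mid S) - I(A;Z\mid S) = I(A;V\mid S,Z) - I(A;Z\mid S,V)$. This follows by expanding $I(A;Z,V\mid S)$ in both orders:
\begin{equation*}
I(A;Z\mid S) + I(A;V\mid S,Z) = I(A;V\mid S) + I(A;Z\mid S,V).
\end{equation*}
Substituting this into the previous display yields exactly the claimed identity.

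The algebra is elementary. The main obstacle is justifying the chain rule and the vanishing of $I(A;Z,V\mid S,G)$ in the general measurable setting, where conditional mutual informations are defined as expected KL divergences between regular conditional distributions. I would assume standard Borel spaces, so that regular conditional laws exist, and invoke the standard chain rule (Kolmogorov--Dobrushin). The vanishing term follows because $P(A\mid S,G,Z,V)=P(A\mid S,G)$ almost surely, the same observation already used in the proof of Proposition~\ref{prop:decomposition_app}.

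One further caveat concerns the subtraction step, which needs the terms to be finite. I would either assume $I(A;G\mid S)<\infty$, which is automatic when $\mathcal A$ is finite since it is then bounded by $\log|\mathcal A|$, or state the identity in the rearranged additive form, which holds in $[0,\infty]$ without any finiteness assumption.

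With the identity in hand, Proposition~\ref{prop:action_info_decomp} follows immediately. By Lemma~\ref{lem:value-suff-doob-dynkin}, value sufficiency makes $V$ a function of $(S,Z)$, so $I(A;V\mid S,Z)=0$ and the last term drops out.
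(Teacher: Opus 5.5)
Your proof is correct. It uses the same two ingredients as the paper's proof: the chain rule for conditional mutual information, and the fact that $Z$ and $V$ are functions of $(S,G)$. The bookkeeping is organized differently, though.

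The paper never leaves the conditionings on $(S,Z)$ and $(S,V)$. It writes $I(A;G\mid S,Z)=I(A;G,V\mid S,Z)=I(A;V\mid S,Z)+I(A;G\mid S,Z,V)$. It then expands $I(A;Z,G\mid S,V)$ in two orders to get $I(A;G\mid S,V)=I(A;Z\mid S,V)+I(A;G\mid S,Z,V)$. Eliminating the common term $I(A;G\mid S,Z,V)$ gives the identity.

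You instead anchor on the coarser quantities $I(A;G\mid S)$ and $I(A;Z,V\mid S)$. This introduces the auxiliary terms $I(A;Z\mid S)$ and $I(A;V\mid S)$, which then cancel. Your version is pleasantly symmetric, but the cancellation requires $I(A;G\mid S)<\infty$. The paper's only subtraction is of $I(A;Z\mid S,V)$, and its finiteness is already presupposed by the statement. Indeed, if $I(A;Z\mid S,V)=\infty$, then $I(A;G\mid S,V)=\infty$ as well, and the right-hand side is undefined.

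Relatedly, your remark that the additive form $I(A;G\mid S,Z)+I(A;Z\mid S,V)=I(A;G\mid S,V)+I(A;V\mid S,Z)$ holds in $[0,\infty]$ is true. However, it does not follow from your subtraction-based route, which still needs $I(A;Z\mid S)$ and $I(A;V\mid S)$ to be finite in order to cancel them. It follows directly from the paper's two expansions, since both sides equal $I(A;V\mid S,Z)+I(A;Z\mid S,V)+I(A;G\mid S,Z,V)$ with no subtraction at all.

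For finite $\mathcal A$, which is the setting the paper actually uses, the difference is immaterial.
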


\begin{proof}
We derive the decomposition using the chain rule for conditional mutual information.

First, consider the term \(I(A; G, V \mid S, Z)\). Since \(V\) is a deterministic function of \((S,G)\), knowing \(G\) (given \(S\)) determines \(V\). Thus, adding \(V\) to the target variables does not change the information:
\[
I(A; G \mid S, Z) = I(A; G, V \mid S, Z).
\]
Applying the chain rule to the pair \((G, V)\):
\begin{equation}
I(A; G, V \mid S, Z) = I(A; V \mid S, Z) + I(A; G \mid S, Z, V).
\label{eq:proof_chain_1}
\end{equation}
Next, we analyze the term \(I(A; G \mid S, Z, V)\). Consider the mutual information \(I(A; Z, G \mid S, V)\). We can expand this in two different ways using the chain rule:

\textbf{Expansion 1:} Condition on \(Z\) first.
\begin{equation}
I(A; Z, G \mid S, V) = I(A; Z \mid S, V) + I(A; G \mid S, V, Z).
\label{eq:proof_chain_2}
\end{equation}

\textbf{Expansion 2:} Condition on \(G\) first.
\[
I(A; Z, G \mid S, V) = I(A; G \mid S, V) + I(A; Z \mid S, V, G).
\]
Since \(Z = \phi_V(S,G)\) is a deterministic function of \((S,G)\), given \((S,V,G)\), \(Z\) is constant. Thus, \(I(A; Z \mid S, V, G) = 0\), yielding:
\begin{equation}
I(A; Z, G \mid S, V) = I(A; G \mid S, V).
\label{eq:proof_chain_3}
\end{equation}

Equating \eqref{eq:proof_chain_2} and \eqref{eq:proof_chain_3}, we have:
\[
I(A; Z \mid S, V) + I(A; G \mid S, V, Z) = I(A; G \mid S, V).
\]
Rearranging for the term appearing in \eqref{eq:proof_chain_1}:
\[
I(A; G \mid S, Z, V) = I(A; G \mid S, V) - I(A; Z \mid S, V).
\]
Substituting this back into \eqref{eq:proof_chain_1}, we obtain the final result:
\[
I(A; G \mid S, Z) = I(A; V \mid S, Z) + I(A; G \mid S, V) - I(A; Z \mid S, V).
\]
\end{proof}

\subsection{The Information Gap under Value Sufficiency}
\label{app:proof_cor_value_suff}

We now show that if a representation is value-sufficient, the decomposition simplifies, explicitly highlighting the irreducible information gap caused by value level sets.

\begin{corollary}[Decomposition under Value Sufficiency]
If \(Z\) is value-sufficient, then \(I(A;V\mid S,Z)=0\) and hence
\begin{equation}
I(A;G\mid S,Z)
=
I(A;G\mid S,V)
\;-\;
I(A;Z\mid S,V).
\end{equation}
\end{corollary}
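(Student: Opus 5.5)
The plan is to plug value sufficiency into the exact identity of Proposition~\ref{prop:exact-decomp-cmi}. That identity already gives
\[
I(A;G\mid S,Z)=I(A;G\mid S,V)-I(A;Z\mid S,V)+I(A;V\mid S,Z)
\]
for any deterministic $Z=\phi_V(S,G)$. So the corollary reduces to a single claim: value sufficiency forces $I(A;V\mid S,Z)=0$. Once that term is shown to vanish, the displayed formula is exactly the remaining two terms.

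To show $I(A;V\mid S,Z)=0$, I will use Lemma~\ref{lem:value-suff-doob-dynkin}, direction (i)$\Rightarrow$(iii). Value sufficiency gives a measurable $\tilde v$ with $V=\tilde v(S,Z)$ almost surely. Then, conditional on $(S,Z)$, the variable $V$ is almost surely constant. I would make this precise through the chain rule:
\[
I(A;V\mid S,Z)=I(A;S,Z,V)-I(A;S,Z).
\]
Since $\sigma(S,Z,V)=\sigma(S,Z)$ modulo null sets, the two unconditional mutual informations coincide and the difference is zero. This assumes they are finite, which holds for instance when $H(A)<\infty$ and in particular for finite $\mathcal A$.

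An alternative route avoids the finiteness caveat. It uses the KL characterization $I(A;V\mid S,Z)=\mathbb E[D_{\mathrm{KL}}(P(A\mid S,Z,V)\,\|\,P(A\mid S,Z))]$. Because $V$ is $\sigma(S,Z)$-measurable, the two conditional laws agree almost surely, so the integrand is zero.

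The only delicate point is this measure-theoretic step: passing from ``$V$ is a.s.\ a function of $(S,Z)$'' to ``conditioning on $V$ in addition to $(S,Z)$ changes nothing.'' I would handle it with the conditional-expectation argument, which mirrors the proof of Lemma~\ref{lem:goal_marginalization}. Since $\sigma(S,Z,V)=\sigma(S,Z)$ up to null sets, for every measurable $B\subseteq\mathcal A$ we have $P(A\in B\mid S,Z,V)=P(A\in B\mid S,Z)$ a.s. Regular conditional distributions exist on standard Borel spaces, so these two conditional laws agree as kernels a.s., and the KL integrand vanishes. Substituting $I(A;V\mid S,Z)=0$ into Proposition~\ref{prop:exact-decomp-cmi} then yields the claim, which is also Proposition~\ref{prop:action_info_decomp} of the main text.
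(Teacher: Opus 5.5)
Your proposal is correct and follows the paper's own proof. The paper also invokes Lemma~\ref{lem:value-suff-doob-dynkin} to write $V=\tilde v(S,Z)$ a.s., concludes $I(A;V\mid S,Z)=0$, and substitutes this into Proposition~\ref{prop:exact-decomp-cmi}; your chain-rule and conditional-law arguments just make rigorous the step the paper states informally, namely that ``$V$ contains no additional information about $A$ given $(S,Z)$.''
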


\begin{proof}
By Lemma~\ref{lem:value-suff-doob-dynkin}, if \(Z\) is value-sufficient, there exists a measurable function \(\tilde v\) such that \(V = \tilde v(S,Z)\) almost surely. This implies that \(V\) is fully determined by \((S,Z)\). Consequently, \(V\) contains no additional information about \(A\) given \((S,Z)\), so \(I(A; V \mid S, Z) = 0\). Substituting this into the identity from Proposition~\ref{prop:exact-decomp-cmi} yields the stated result.
\end{proof}

\section{A Sufficient Condition for $I(A;G \mid S, V)>0$}
\label{app:variance_condition}

In Section~\ref{sec:value_sufficiency}, we argued that conditioning on the optimal value $V$ is typically insufficient to resolve goal-dependent action ambiguity, implying $I(A; G \mid S, V) > 0$. In this appendix, we formalize this claim by providing a purely stochastic sufficient condition based on \emph{conditional action variance}.

\subsection{Formal Setup and Assumption}

Let $S, G, A, V$ be random variables on a probability space $(\Omega, \mathcal{F}, \mathbb{P})$ taking values in measurable spaces $(\mathcal{S}, \mathcal{B}(\mathcal{S}))$, $(\mathcal{G}, \mathcal{B}(\mathcal{G}))$, $(\mathcal{A}, \mathcal{B}(\mathcal{A}))$, and $(\mathbb{R}, \mathcal{B}(\mathbb{R}))$. In this section, we use $\mathbb{P}$ to explicitly denote the underlying probability measure, whereas the main text overloads $P$ for (conditional) laws and distribution functions.

\begin{assumption}[Positive Conditional Action Variance]
\label{assump:variance}
There exists a family of measurable action events
\[
\{B_{s,v} \subseteq \mathcal{A} : (s,v) \in \mathcal{S} \times \mathbb{R}\}
\]
such that the induced random event $\{A \in B_{S,V}\}$ satisfies:
\begin{equation}
\mathbb{E}\Big[\mathrm{Var}\big(\mathbb{P}(A \in B_{S,V} \mid S, V, G)\ \big|\ S, V\big)\Big] > 0.
\label{eq:variance_condition}
\end{equation}
Here, each $B_{s,v} \in \mathcal{B}(\mathcal{A})$ represents an action set dependent on the state and value.
\end{assumption}

\subsection{Intuition and Plausibility}

\paragraph{Intuitive Meaning.}
This assumption captures the fundamental limitation of value as a scalar signal: $V$ typically encodes ``distance'' or ``difficulty,'' but discards ``direction.'' Consider an agent at a fork in the road; two goals may be equidistant (same $V$) yet require opposite actions (different $A$). The assumption serves as a stochastic detector for this phenomenon, checking if the action distribution fluctuates across goals within an iso-value set. If it does, $V$ has failed to screen off $G$ from $A$.

\paragraph{Plausibility in Non-Trivial Environments.}
This condition is satisfied in almost any environment beyond simple 1D tasks.
\begin{itemize}
    \item \textbf{Symmetry:} Spatial symmetries naturally create the ``fork'' scenario described above (e.g., targets $N$ steps away in opposite directions have identical values but distinct optimal actions).
    \item \textbf{General Case:} In high-dimensional state spaces, value level sets are typically large manifolds that almost invariably encompass goals requiring different optimal actions, unless the value function is pathologically unique for every goal-action pair.
\end{itemize}

\subsection{Lower Bound for $I(A;G\mid S,V)$}

We now show that positive conditional action variance implies a positive information gap.

\begin{proposition}[Positive Conditional Action Variance $\Rightarrow$ $I(A;G\mid S,V)>0$]
\label{prop:variance_lb}
Under Assumption~\ref{assump:variance}, the conditional mutual information is strictly positive and lower-bounded by the conditional action variance:
\begin{equation}
I(A; G \mid S, V)
\ \ge\
2\,\mathbb{E}\Big[\mathrm{Var}\big(\mathbb{P}(A \in B_{S,V} \mid S, V, G)\ \big|\ S, V\big)\Big]
\ >\ 0.
\label{eq:variance_lb}
\end{equation}
\end{proposition}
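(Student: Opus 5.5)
The plan is to write $I(A;G\mid S,V)$ as an expected KL divergence, reduce it to a binary event by data processing, and then apply Pinsker's inequality pointwise.

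\textbf{Step 1 (KL representation).} Exactly as in Eq.~\eqref{eq:cmi_proof_ref}, we have
\[
I(A;G\mid S,V)=\mathbb{E}\big[D_{\mathrm{KL}}(\mathbb{P}(A\in\cdot\mid S,V,G)\,\|\,\mathbb{P}(A\in\cdot\mid S,V))\big].
\]
Because $V=v(S,G)$, conditioning on $(S,V,G)$ is the same as conditioning on $(S,G)$.

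\textbf{Step 2 (Binary reduction).} Fix a realization $(s,v,g)$ and apply the deterministic map $a\mapsto \mathbf{1}\{a\in B_{s,v}\}$. The set $B_{s,v}$ depends only on $(s,v)$, which is shared by both conditional laws, so the same map is used on both sides. By the data-processing inequality for KL, the inner divergence is at least the KL between two Bernoulli laws with parameters $p:=\mathbb{P}(A\in B_{S,V}\mid S,V,G)$ and $\bar p:=\mathbb{P}(A\in B_{S,V}\mid S,V)$.

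\textbf{Step 3 (Pinsker).} For Bernoulli laws, $D_{\mathrm{KL}}(\mathrm{Ber}(p)\|\mathrm{Ber}(\bar p))\ge 2(p-\bar p)^2$. This holds in nats, since the total variation distance between the two laws is $|p-\bar p|$.

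\textbf{Step 4 (Identify the variance).} By the goal-marginalization lemma (Lemma~\ref{lem:goal_marginalization}), with $(S,V)$ playing the role of the conditioning pair, we get $\bar p=\mathbb{E}[p\mid S,V]$. Hence
\[
\mathbb{E}[(p-\bar p)^2\mid S,V]=\mathrm{Var}(p\mid S,V).
\]
Taking total expectation and combining Steps 1 through 3 gives
\[
I(A;G\mid S,V)\ge 2\,\mathbb{E}[\mathrm{Var}(p\mid S,V)],
\]
which is strictly positive by Assumption~\ref{assump:variance}.

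\textbf{Main obstacle: measurability.} Step 2 needs a state- and value-dependent event $B_{S,V}$ to be legitimate, so the family $\{B_{s,v}\}$ must be jointly measurable. I would require that the set $\{(s,v,a): a\in B_{s,v}\}$ be measurable, and read this condition into the assumption. Then $p$ and $\bar p$ are well-defined regular conditional probabilities, evaluated under the conditional laws given $(S,V)$. The pointwise data-processing step is applied after disintegrating with respect to $(S,V)$, which is valid on standard Borel spaces.
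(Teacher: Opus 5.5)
Your proposal is correct and follows essentially the same route as the paper. The paper applies data processing to the conditional mutual information, $I(A;G\mid S,V)\ge I(Y;G\mid S,V)$ with $Y=\mathbf{1}\{A\in B_{S,V}\}$, writes $I(Y;G\mid S,V)$ as an expected Bernoulli KL, and then applies Pinsker and identifies $\mathbb{E}[(p-\bar p)^2\mid S,V]$ as the conditional variance; this is your pointwise KL data-processing argument in different packaging. Your explicit joint-measurability requirement on $\{(s,v,a): a\in B_{s,v}\}$ is a detail the paper leaves implicit, and it is a sensible addition.
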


\begin{proof}
We utilize Pinsker's inequality for Bernoulli distributions. For $\mathrm{Bern}(p)$ and $\mathrm{Bern}(q)$, Pinsker's inequality implies:
\begin{equation}
\mathrm{KL}(\mathrm{Bern}(p) \,\|\, \mathrm{Bern}(q)) \ge 2(p-q)^2.
\label{eq:pinsker_bern}
\end{equation}

Define the binary random variable indicating whether the action falls into the target set:
\[
Y := \mathbf{1}\{A \in B_{S,V}\} \in \{0, 1\}.
\]
Since $Y$ is a deterministic function of $(S, V, A)$, the data processing inequality for conditional mutual information gives:
\begin{equation}
I(A; G \mid S, V) \ge I(Y; G \mid S, V).
\label{eq:dpi_variance}
\end{equation}

Now, let $p$ be the goal-dependent probability and $\bar{p}$ be the marginalized probability (averaged over goals):
\[
p := \mathbb{P}(Y=1 \mid S, V, G) = \mathbb{P}(A \in B_{S,V} \mid S, V, G),
\]
\[
\bar{p} := \mathbb{P}(Y=1 \mid S, V) = \mathbb{E}[p \mid S, V].
\]
Conditioned on $(S, V, G)$, $Y \sim \mathrm{Bern}(p)$, and conditioned on $(S, V)$ alone, the marginal distribution is $Y \sim \mathrm{Bern}(\bar{p})$.
Expressing $I(Y; G \mid S, V)$ as the expected KL divergence:
\[
I(Y; G \mid S, V)
=
\mathbb{E}\Big[ \mathrm{KL}\big(\mathrm{Bern}(p) \,\|\, \mathrm{Bern}(\bar{p})\big) \Big].
\]
Applying Pinsker's inequality \eqref{eq:pinsker_bern} pointwise:
\[
I(Y; G \mid S, V)
\ \ge\
\mathbb{E}\Big[ 2(p - \bar{p})^2 \Big].
\]
Finally, recognizing that $\mathbb{E}[(p - \bar{p})^2 \mid S, V]$ is precisely the conditional variance of $p$:
\[
\mathbb{E}\big[(p - \bar{p})^2\big]
=
\mathbb{E}\Big[\mathrm{Var}(p \mid S, V)\Big]
=
\mathbb{E}\Big[\mathrm{Var}\big(\mathbb{P}(A \in B_{S,V} \mid S, V, G) \mid S, V\big)\Big].
\]
Combining this with \eqref{eq:dpi_variance} yields the bound in \eqref{eq:variance_lb}. The strict positivity follows directly from Assumption~\ref{assump:variance}.
\end{proof}

\section{Discrete Cube: Detailed Setup}
\label{app:toy_details}

This appendix provides comprehensive details on the Discrete Cube environment introduced in Section~\ref{sec:toy_cube}, including the full specification of states, actions, transitions, the computation of optimal values and policies, the construction of the representation library, and the evaluation protocol.

\subsection{Environment Specification}
\label{app:toy_env}

\paragraph{State space.}
The environment is a $4 \times 4$ grid containing an agent and two cubes (Red and Blue). A state is defined as:
\begin{equation*}
s = (p_a, p_r, p_b, h) \in \mathcal{S},
\end{equation*}
where:
\begin{itemize}
\item $p_a = (x_a, y_a) \in \{0,1,2,3\}^2$ is the agent position,
\item $p_r = (x_r, y_r)$ is the red cube position (or a special ``held'' marker when held),
\item $p_b = (x_b, y_b)$ is the blue cube position (or a special ``held'' marker when held),
\item $h \in \{\texttt{none}, \texttt{red}, \texttt{blue}\}$ is the gripper status.
\end{itemize}
When $h = \texttt{red}$, the red cube moves with the agent so that $p_r = p_a$; similarly for $h = \texttt{blue}$.
The two cubes cannot occupy the same position when both are on the floor, i.e., $p_r \neq p_b$ when $h = \texttt{none}$.

For $n=4$, the state space contains 4,352 reachable states.

\paragraph{Action space.}
The action set consists of six discrete actions:
\begin{equation*}
\mathcal{A} = \{\texttt{up}, \texttt{down}, \texttt{left}, \texttt{right}, \texttt{pick}, \texttt{place}\}.
\end{equation*}

\paragraph{Transition dynamics.}
Transitions are deterministic:
\begin{itemize}
\item \textbf{Movement actions} (\texttt{up}, \texttt{down}, \texttt{left}, \texttt{right}): Move the agent by one cell in the specified direction, clipped at grid borders. If the agent is holding a cube, the cube moves with the agent.
\item \textbf{\texttt{pick}}: Succeeds if and only if $h = \texttt{none}$ and the agent is on a cube. The agent picks up the cube at its current position.
\item \textbf{\texttt{place}}: Succeeds if and only if $h \neq \texttt{none}$ and the agent's current position is not occupied by another cube. The held cube is placed at the agent's position.
\end{itemize}

\paragraph{Goal space.}
A goal specifies a target cube and a desired position:
\begin{equation*}
g = (t, p_g) \in \mathcal{G}, \qquad t \in \{\texttt{red}, \texttt{blue}\}, \; p_g \in \{0,1,2,3\}^2.
\end{equation*}
For $n=4$, we have $|\mathcal{G}| = 2 \times 16 = 32$ goals.

\paragraph{Success condition.}
A state $s$ is successful for goal $g = (t, p_g)$ if and only if:
\begin{enumerate}
\item The target cube $t$ is at position $p_g$, and
\item The agent is not holding the target cube ($h \neq t$).
\end{enumerate}

\paragraph{State-goal pair filtering.}
To ensure well-defined optimal actions and exclude trivial or degenerate cases, we filter the set of $(s, g)$ pairs used for computing information-theoretic quantities:
\begin{itemize}
\item When $h = \texttt{none}$: the red cube, blue cube, and goal positions must all be mutually distinct ($p_r \neq p_b$, $p_r \neq p_g$, $p_b \neq p_g$).
\item When $h \neq \texttt{none}$: the agent position, floor cube position, and goal position must all be mutually distinct.
\end{itemize}
After filtering, we obtain 120,960 valid $(s, g)$ pairs for $n=4$.

\subsection{Optimal Value and Policy via BFS}
\label{app:toy_bfs}

\paragraph{Shortest-path distance.}
For each goal $g$, we compute the exact optimal distance-to-success $D^\star(s, g) \in \mathbb{N}$ for all states $s$ using breadth-first search (BFS) backward from the success states:
\begin{enumerate}
\item Initialize $D^\star(s, g) = 0$ for all success states (where target cube is at $p_g$ and $h \neq t$).
\item Propagate distances backward through the transition graph: for each state $s'$ with an action $a$ leading to state $s$, set $D^\star(s', g) = D^\star(s, g) + 1$ if not yet visited.
\item Continue until all reachable states are labeled.
\end{enumerate}

\paragraph{Optimal value function.}
We define the optimal value as the negative distance:
\begin{equation*}
V^\star(s, g) := -D^\star(s, g).
\end{equation*}
This represents the (negated) minimum number of steps required to achieve the goal from state $s$.

For intuition, when not holding any cube, the distance roughly satisfies:
\begin{equation*}
D^\star(s, g) \approx d_{at}(s, g) + d_{bg}(s, g) + 2,
\end{equation*}
where $d_{at}$ is the Manhattan distance from agent to target cube, $d_{bg}$ is the Manhattan distance from target cube to goal, and $+2$ accounts for \texttt{pick} and \texttt{place} actions. The exact BFS computation handles edge cases where this approximation breaks down (e.g., when holding the non-target cube).

\paragraph{Optimal action distribution.}
An action $a$ is optimal at state $s$ for goal $g$ if and only if executing $a$ leads to a successor state $s'$ with:
\begin{equation*}
D^\star(s', g) = D^\star(s, g) - 1 \qquad \Leftrightarrow \qquad V^\star(s', g) = V^\star(s, g) + 1.
\end{equation*}
Let $\mathcal{A}^\star(s, g) \subseteq \mathcal{A}$ denote the set of optimal actions. We define the optimal action distribution with uniform tie-breaking:
\begin{equation*}
P^\star(a \mid s, g) = \begin{cases}
\frac{1}{|\mathcal{A}^\star(s, g)|} & \text{if } a \in \mathcal{A}^\star(s, g), \\
0 & \text{otherwise}.
\end{cases}
\end{equation*}

\subsection{Feature Representation}
\label{app:toy_features}

For constructing goal representations, we define the following features based on relative positions. Let $(x_a, y_a)$ be the agent position, $(x_t, y_t)$ the target cube position, and $(x_g, y_g)$ the goal position.

\paragraph{Directional features.}
\begin{align*}
dx_1 &= x_t - x_a, \qquad dy_1 = y_t - y_a, \\
dx_2 &= x_g - x_t, \qquad dy_2 = y_g - y_t.
\end{align*}
Here $(dx_1, dy_1)$ points from the agent to the target cube, and $(dx_2, dy_2)$ points from the target cube to the goal.

\paragraph{Distance features.}
\begin{align*}
d_{at} &= |dx_1| + |dy_1|, \\
d_{bg} &= |dx_2| + |dy_2|.
\end{align*}

\paragraph{Handling held cubes.}
When the agent is holding the target cube, we set $(x_t, y_t) = (x_a, y_a)$ so that $dx_1 = dy_1 = 0$ and $d_{at} = 0$.

\subsection{Representation Library}
\label{app:toy_phi_library}

We construct a diverse library of goal representations $\phi: \mathcal{S} \times \mathcal{G} \to \mathcal{Z}$ to systematically explore the $(\Delta_V, \Delta_A)$ space.

\subsubsection{Baseline Representations}

We define four baseline representations that anchor different regions of the sufficiency space:

\begin{enumerate}
\item \textbf{\texttt{full}}: $Z = (h, t, dx_1, dy_1, dx_2, dy_2)$.

Preserves complete directional information. This representation is both action-sufficient ($\Delta_A = 0$) and value-sufficient ($\Delta_V = 0$) by construction, as it retains all information needed to determine both the optimal action and the optimal value.

\item \textbf{\texttt{signs}}: $Z = (h, t, \mathrm{sgn}(dx_1), \mathrm{sgn}(dy_1), \mathrm{sgn}(dx_2), \mathrm{sgn}(dy_2))$.

Coarsens directions to signs only ($-1, 0, +1$), discarding magnitude information. This representation is nearly action-sufficient ($\Delta_A \approx 0.0003$) because directional signs are typically sufficient to determine the optimal movement direction. However, it is value-insufficient ($\Delta_V \approx 0.99$) because different magnitudes correspond to different distances (and hence values).

\item \textbf{\texttt{value\_only}}: $Z = (V^\star,)$.

Preserves only the optimal value. By construction, this is perfectly value-sufficient ($\Delta_V = 0$). However, it is highly action-insufficient ($\Delta_A \approx 0.23$) because many different state-goal pairs share the same value while requiring different optimal actions.

\item \textbf{\texttt{distances}}: $Z = (h, d_{at}, d_{bg})$.

Preserves distances but not directions. This is value-sufficient ($\Delta_V = 0$) since the value depends only on distances. However, it is action-insufficient ($\Delta_A \approx 0.08$) because different directions with the same distances require different movement actions.
\end{enumerate}

\subsubsection{Random Representation Generation}

To broadly cover the $(\Delta_V, \Delta_A)$ plane, we generate 2000 random representations using a template-based approach. Each random representation is constructed by:
\begin{enumerate}
\item Sampling a template type from a predefined set with specified probabilities.
\item Sampling template-specific parameters (e.g., which features to include, which transforms to apply).
\end{enumerate}

\paragraph{Feature transforms.}
We define several transforms that can be applied to individual features:

\textit{Directional transforms} (applied to $dx_1, dy_1, dx_2, dy_2$):
\begin{itemize}
\item \texttt{raw}: identity, $x \mapsto x$
\item \texttt{sign}: sign function, $x \mapsto \mathrm{sgn}(x) \in \{-1, 0, +1\}$
\item \texttt{abs}: absolute value, $x \mapsto |x|$
\item \texttt{clip\_k}: clip to $[-k, k]$, i.e., $x \mapsto \max(-k, \min(k, x))$ for $k \in \{1, 2, 3\}$
\item \texttt{parity}: parity of absolute value, $x \mapsto |x| \mod 2$
\item \texttt{sgn\_bucket\_k}: signed bucket, $x \mapsto \mathrm{sgn}(x) \cdot \min(|x|, k)$ for $k \in \{2, 3\}$
\end{itemize}

\textit{Distance transforms} (applied to $d_{at}, d_{bg}$):
\begin{itemize}
\item \texttt{raw}: identity
\item \texttt{bucket\_k}: $x \mapsto \min(x, k)$ for $k \in \{2, 3, 4\}$
\item \texttt{parity}: $x \mapsto x \mod 2$
\end{itemize}

\textit{Value transforms} (applied to $V^\star$):
\begin{itemize}
\item \texttt{raw}: identity
\item \texttt{bucket\_3}: bucket the cost $-V^\star$ into $\{0, 1, 2, 3, \geq 4\}$
\end{itemize}

\paragraph{Template types.}
The random representations are generated from the following templates:

\begin{itemize}
\item \textbf{\texttt{value\_plus}}: Includes holding status, optionally target identity, and a transformed value. Tends to be value-sufficient with varying action sufficiency.

\item \textbf{\texttt{dist\_coarse}}: Includes holding status, optionally target identity, and transformed distances $d_{at}$, $d_{bg}$. Explores the value-sufficient but action-insufficient region.

\item \textbf{\texttt{dir\_subset}}: Includes holding status, optionally target identity, and a random subset of directional signs. Explores action-sufficient but value-insufficient region.

\item \textbf{\texttt{dir\_coarse}}: Includes all four directional features with independently sampled transforms. Covers a wide range of both sufficiency measures.

\item \textbf{\texttt{mixed\_dir\_dist}}: Combines a subset of directional features with distance features, each with random transforms.

\item \textbf{\texttt{phase\_split}}: Conditions representation on the gripper status (holding vs.\ not holding), using different feature subsets for each phase.

\item \textbf{\texttt{proj\_mod}}: Projects directional features through random linear combinations modulo a small integer.

\item \textbf{\texttt{two\_hash}}: Hashes pairs of directional features independently.

\item \textbf{\texttt{hashed\_actor}}: Hashes the full directional tuple modulo a random integer, introducing collisions that degrade both sufficiency measures.

\item \textbf{\texttt{hashed\_dist}}: Hashes the distance tuple modulo a random integer.

\item \textbf{\texttt{drop\_id}}: Drops target identity, forcing the representation to be shared across red and blue cube goals.
\end{itemize}

Templates are sampled with weights designed to cover the $(\Delta_V, \Delta_A)$ plane broadly.

\subsection{Mixed Policy and Control Evaluation}
\label{app:toy_eval}

\paragraph{Mixed policy.}
Given a representation $\phi$, the mixed policy conditions only on the current state $s$ and the representation value $z = \phi(s, g)$:
\begin{equation*}
\pi_\phi(a \mid s, z) := \mathbb{E}[P^\star(a \mid s, G) \mid S = s, Z = z] = \sum_{g: \phi(s,g) = z} P^\star(a \mid s, g) \cdot P(G = g \mid S = s, Z = z).
\end{equation*}
Under the uniform distribution over filtered $(s, g)$ pairs, this simplifies to averaging $P^\star(\cdot \mid s, g)$ uniformly over all goals $g$ that map to the same $z$:
\begin{equation*}
\pi_\phi(a \mid s, z) = \frac{1}{|\{g : \phi(s, g) = z\}|} \sum_{g: \phi(s,g) = z} P^\star(a \mid s, g).
\end{equation*}

\paragraph{Rollout procedure.}
To evaluate control success, we sample tasks and execute rollouts:
\begin{enumerate}
\item \textbf{Task sampling}: Sample 600 tasks $(s_0, g)$ where $h(s_0) = \texttt{none}$ and the target is not already at the goal.
\item \textbf{Horizon setting}: For each task, set $H = \min\{D^\star(s_0, g) + 6, 30\}$, where the margin $m = 6$ and cap $H_{\max} = 30$ prevent indefinite wandering while allowing some slack for suboptimal actions.
\item \textbf{Rollout}: For each task, execute 50 independent rollouts. At each step $t$:
\begin{enumerate}
\item Compute $z_t = \phi(s_t, g)$.
\item Sample $a_t \sim \pi_\phi(\cdot \mid s_t, z_t)$.
\item Execute $a_t$ to obtain $s_{t+1}$.
\item Terminate if success or $t \geq H$.
\end{enumerate}
\item \textbf{Success rate}: For each representation $\phi$, compute the fraction of rollouts that reach success.
\end{enumerate}

\subsection{Information-Theoretic Quantities}
\label{app:toy_info}

All information-theoretic quantities are computed exactly under the uniform distribution over filtered $(s, g)$ pairs.

\paragraph{Conditional entropies.}
\begin{align*}
H(A \mid S, G) &= \mathbb{E}_{s,g}\left[-\sum_a P^\star(a \mid s, g) \log P^\star(a \mid s, g)\right], \\
H(A \mid S, Z) &= \mathbb{E}_{s,g}\left[-\sum_a \pi_\phi(a \mid s, \phi(s,g)) \log \pi_\phi(a \mid s, \phi(s,g))\right], \\
H(V \mid S, G) &= 0, \\
H(V \mid S, Z) &= \mathbb{E}_{s,g}\left[H(P(V \mid S=s, Z=\phi(s,g)))\right].
\end{align*}

\paragraph{Sufficiency gaps.}
\begin{align*}
\Delta_A &:= I(A; G \mid S, Z) = H(A \mid S, Z) - H(A \mid S, G), \\
\Delta_V &:= I(V; G \mid S, Z) = H(V \mid S, Z) - H(V \mid S, G) = H(V \mid S, Z).
\end{align*}

\paragraph{Within-level-set disambiguation.}
\begin{equation*}
I(A; Z \mid S, V) = H(A \mid S, V) - H(A \mid S, V, Z),
\end{equation*}
where $H(A \mid S, V)$ is computed by grouping $(s, g)$ pairs by $(s, V^\star(s,g))$ and averaging the entropy of the optimal action distribution within each group.
\section{Proofs for Section~\ref{sec:actor_rep}}
\label{app:actor_representations_proof}

In this section, we provide the formal derivations connecting the actor training objective (log-loss) to the information-theoretic concept of action sufficiency.

\subsection{Decomposition of Actor Log-Loss}
\label{subsec:actor_log_loss_decomp_app}

We first prove that the actor's negative log-likelihood (NLL) objective is equivalent to the conditional entropy of the optimal policy plus the conditional KL risk defined in the previous sections.

\begin{lemma}[Actor Log-Loss as Excess KL Risk]
\label{lem:actor_log_loss_as_excess_kl_risk_app}
For any actor parameters $(\theta,\phi)$ with representation $Z=\phi(S,G)$, the actor training loss satisfies:
\begin{equation}
\mathcal{L}_{\mathrm{act}}(\theta,\phi)
=
H(A\mid S,G) \;+\; \mathcal R(\pi_\theta,\phi),
\end{equation}
where $\mathcal R(\pi_\theta,\phi)$ is the conditional KL risk.
\end{lemma}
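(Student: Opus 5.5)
The plan is a pointwise add-and-subtract of the log of the optimal action law, followed by the tower property. Fix $(s,g)$ and write $z=\phi(s,g)$. Since $A$ is drawn from $P(\cdot\mid S,G)$, conditioning on $(S,G)=(s,g)$ gives the inner loss
\[
\mathbb{E}\bigl[-\log\pi_\theta(A\mid s,z)\mid S=s,G=g\bigr]=\int P(a\mid s,g)\bigl(-\log\pi_\theta(a\mid s,z)\bigr)\,da .
\]
Inside the integral I write $-\log\pi_\theta(a\mid s,z) = -\log P(a\mid s,g) + \log\frac{P(a\mid s,g)}{\pi_\theta(a\mid s,z)}$. This identity holds wherever $P(a\mid s,g)>0$, which is the only region the integral sees.

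Integrating against $P(\cdot\mid s,g)$ splits the inner loss into two pieces:
\begin{itemize}
\item the first piece is the pointwise conditional entropy $H(P(\cdot\mid s,g))$ (differential entropy in the continuous case, with the same density convention as Proposition~\ref{prop:decomposition});
\item the second piece is exactly $D_{\text{KL}}(P(\cdot\mid s,g)\,\|\,\pi_\theta(\cdot\mid s,z))$.
\end{itemize}

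Next I take the outer expectation over $(S,G)$ and apply the tower property. The entropy piece averages to $H(A\mid S,G)$. Because $Z$ is a deterministic function of $(S,G)$, the KL piece averages to $\mathcal R(\pi_\theta;\phi)$ exactly as defined in Eq.~\eqref{eq:kl_risk}. Adding the two gives Eq.~\eqref{eq:actor_nll_decomp}.

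The main obstacle is justifying that the expectation of the sum equals the sum of the expectations. This needs care with infinite values and with sign.

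\emph{Finite entropy.} I will assume $H(A\mid S,G)$ is finite. This holds in the discrete case, where entropy lies between $0$ and $\log|\mathcal A|$, and I take it as a standing assumption otherwise.

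\emph{The KL term.} It is nonnegative, so its expectation is always well defined in $[0,\infty]$. The identity therefore holds in the extended reals. In particular, if $\pi_\theta$ puts zero mass where $P(\cdot\mid s,g)$ puts positive mass on a non-null set, both sides are $+\infty$.

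\emph{Continuous actions.} Here the differential entropy can be negative, so I will use finiteness of $\mathbb{E}|\log P(A\mid S,G)|$ to split the integral legitimately. No earlier result is needed beyond the definition of $\mathcal R$; Proposition~\ref{prop:decomposition} is not required for this lemma, only for the subsequent theorem.
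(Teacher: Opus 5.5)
Your proposal is correct and follows essentially the same route as the paper: the paper forms $\mathcal{L}_{\mathrm{act}} - H(A\mid S,G) = \mathbb{E}[\log(P(A\mid S,G)/\pi_\theta(A\mid S,Z))]$ globally and then conditions on $(S,G)$ to read off the inner expectation as the KL divergence, which is your pointwise add-and-subtract split done in the opposite order. Your integrability remarks (finite $H(A\mid S,G)$, and nonnegativity of the KL term so the identity holds in $[0,\infty]$) are extra rigor the paper omits, and they do not change the argument.
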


\begin{proof}
Recall the definition of the conditional entropy of the data generating distribution:
\[
H(A\mid S,G) = \mathbb E_{(S,G,A)\sim P}\big[-\log P(A\mid S,G)\big].
\]
We expand the expression for the difference between the actor loss and the conditional entropy:
\begin{align*}
\mathcal{L}_{\mathrm{act}}(\theta,\phi) - H(A\mid S,G)
&= \mathbb E\big[-\log \pi_\theta(A\mid S,Z)\big] - \mathbb E\big[-\log P(A\mid S,G)\big] \\
&= \mathbb E\Big[\log P(A\mid S,G) - \log \pi_\theta(A\mid S,Z)\Big] \\
&= \mathbb E\left[\log \frac{P(A\mid S,G)}{\pi_\theta(A\mid S,Z)}\right].
\end{align*}
By the law of iterated expectations, conditioning on $(S,G)$:
\begin{align*}
\mathbb E\left[\log \frac{P(A\mid S,G)}{\pi_\theta(A\mid S,Z)}\right]
&= \mathbb E_{(S,G)\sim P} \left[ \mathbb E_{A \sim P(\cdot|S,G)} \left[ \log \frac{P(A\mid S,G)}{\pi_\theta(A\mid S,Z)} \;\middle|\; S,G \right] \right] \\
&= \mathbb E_{(S,G)\sim P} \Big[ \mathrm{KL}\big(P(\cdot\mid S,G)\ \big\|\ \pi_\theta(\cdot\mid S,Z)\big) \Big] \\
&= \mathcal R(\pi_\theta,\phi).
\end{align*}
This completes the proof.
\end{proof}

\subsection{Bounding the Action Sufficiency Gap}
\label{subsec:bounding_action_sufficiency_gap_app}

Using the decomposition derived above, we now show that minimizing the actor log-loss implicitly minimizes the irreducible information gap $I(A;G\mid S,Z)$.

\begin{theorem}[Near-Optimal Actor NLL Implies Approximate Action Sufficiency]
\label{thm:near_optimal_actor_nll_implies_approximate_action_sufficiency_app}
For any $(\theta,\phi)$ with $Z=\phi(S,G)$, the action sufficiency gap is bounded by the excess loss:
\begin{equation}
I(A;G\mid S,Z)
\ \le\
\mathcal{L}_{\mathrm{act}}(\theta,\phi) \;-\; H(A\mid S,G).
\end{equation}
Consequently, if $\mathcal{L}_{\mathrm{act}}(\theta,\phi) \le H(A\mid S,G) + \varepsilon$, then $Z$ is $\varepsilon$-approximately action-sufficient.
\end{theorem}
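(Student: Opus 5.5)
The bound follows by chaining the two decompositions already established, so I will not compute anything directly. Lemma~\ref{lem:actor_log_loss_as_excess_kl_risk_app} gives the identity
\[
\mathcal L_{\mathrm{act}}(\theta,\phi) - H(A\mid S,G) = \mathcal R(\pi_\theta,\phi).
\]
It therefore suffices to show $\mathcal R(\pi_\theta,\phi) \ge I(A;G\mid S,Z)$ for every choice of policy head $\theta$.

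For this I will invoke Proposition~\ref{prop:decomposition_app} with $\pi^\ell = \pi_\theta$. It writes the risk as
\[
\mathcal R(\pi_\theta,\phi) = \mathbb E\big[D_{\mathrm{KL}}(P(\cdot\mid S,Z)\,\|\,\pi_\theta(\cdot\mid S,Z))\big] + I(A;G\mid S,Z).
\]
The modeling-error term is an expectation of KL divergences, so it is pointwise nonnegative by Gibbs' inequality and hence nonnegative in expectation. Dropping it gives $\mathcal R(\pi_\theta,\phi)\ge I(A;G\mid S,Z)=\Delta_A$. Combined with the lemma, this is exactly Eq.~\eqref{eq:nll_implies_action_gap}.

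The $\varepsilon$ statement is immediate. Substituting the hypothesis $\mathcal L_{\mathrm{act}}(\theta,\phi)\le H(A\mid S,G)+\varepsilon$ into the bound yields $\Delta_A\le\varepsilon$, which is Eq.~\eqref{eq:eps_action_sufficient}.

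I expect the only delicate point to be well-definedness rather than the algebra itself. The subtraction $\mathcal L_{\mathrm{act}} - H(A\mid S,G)$ needs $H(A\mid S,G)$ to be finite, so that there is no $\infty-\infty$ issue. This holds automatically for finite action sets. For continuous actions, where $H$ is a differential entropy, I would assume $\mathbb E[|\log P(A\mid S,G)|]<\infty$; if instead $\mathcal L_{\mathrm{act}}=\infty$, the inequality holds trivially.

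Beyond that, the proof relies on the measurability facts already used in the proposition's proof. The posterior $P(\cdot\mid S,Z)$ must exist as a regular conditional distribution, which the standard Borel setting guarantees, and Lemma~\ref{lem:goal_marginalization} supplies the marginalization step used in that proof.
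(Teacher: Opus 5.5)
Your proposal is correct and matches the paper's proof: it combines the identity $\mathcal L_{\mathrm{act}}(\theta,\phi)-H(A\mid S,G)=\mathcal R(\pi_\theta,\phi)$ from Lemma~\ref{lem:actor_log_loss_as_excess_kl_risk_app} with the risk decomposition of Proposition~\ref{prop:decomposition_app}, drops the nonnegative modeling-error term, and then substitutes the $\varepsilon$ hypothesis. Your added caveats on finiteness of $H(A\mid S,G)$ (to rule out $\infty-\infty$) and on the existence of regular conditional distributions are reasonable technical points that the paper leaves implicit.
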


\begin{proof}
From Lemma~\ref{lem:actor_log_loss_as_excess_kl_risk_app}, we have the identity:
\begin{equation}
\mathcal{L}_{\mathrm{act}}(\theta,\phi) - H(A\mid S,G) = \mathcal R(\pi_\theta,\phi).
\label{eq:excess_risk_identity_app}
\end{equation}
Recall the decomposition of the conditional KL risk derived in the previous section (Proposition regarding Conditional KL Risk Decomposition):
\[
\mathcal R(\pi_\theta,\phi)
=
\underbrace{\mathbb E\Big[\mathrm{KL}\big(P(A\mid S,Z)\ \big\|\ \pi_\theta(A\mid S,Z)\big)\Big]}_{\text{Modeling Error} \ge 0}
\;+\;
\underbrace{I(A;G\mid S,Z)}_{\text{Representation Error}}.
\]
Since the KL divergence is non-negative, the modeling error term is non-negative. Therefore:
\begin{equation}
\mathcal R(\pi_\theta,\phi) \ge I(A;G\mid S,Z).
\label{eq:risk_lower_bound_app}
\end{equation}
Combining Eq.~\eqref{eq:excess_risk_identity_app} and Eq.~\eqref{eq:risk_lower_bound_app} yields:
\[
I(A;G\mid S,Z) \le \mathcal R(\pi_\theta,\phi) = \mathcal{L}_{\mathrm{act}}(\theta,\phi) - H(A\mid S,G).
\]
For the second part of the theorem, substituting the condition $\mathcal{L}_{\mathrm{act}}(\theta,\phi) \le H(A\mid S,G) + \varepsilon$ into the inequality gives:
\[
I(A;G\mid S,Z) \le (H(A\mid S,G) + \varepsilon) - H(A\mid S,G) = \varepsilon.
\]
Thus, $Z$ is $\varepsilon$-approximately action-sufficient.
\end{proof}
\section{Experimental Details}
\label{app:exp_details}

\subsection{Tasks and Datasets}
We evaluate our approach using the \texttt{Cube} environment, which is a high-dimensional robotic manipulation suite. The task requires a robotic arm to manipulate and stack cubic blocks into specific goal configurations, presenting challenges in precision and long-horizon planning. To assess the scalability of our method, we conduct experiments across three levels of task complexity based on the number of objects: \texttt{double} (2 cubes), \texttt{triple} (3 cubes), and \texttt{quadruple} (4 cubes).

For each task variation, we utilize two distinct types of datasets representing different data quality distributions. The \texttt{play} dataset is generated via an oracle planner that demonstrates near-optimal behavior and efficient paths toward objectives. In contrast, the \texttt{noisy} dataset is collected using a markovian planner with added stochasticity, resulting in sub-optimal and exploratory trajectories. The data collection process is entirely task-agnostic; agents perform random interactions by picking a cube and placing it at a random coordinate. Each resulting dataset consists of 100M transitions, structured as 100 sub-datasets containing 1,000 trajectories of 1,000 timesteps each.

\subsection{Training and Evaluation Details}
All offline GCRL algorithms are trained for a total of 2.5M gradient steps. We conduct intermediate evaluations every 500K steps to monitor the stability and progress of the learning process. 

Success is measured against 5 specific goal configurations defined for each environment. During the evaluation phase, the agent is tested 20 times for each goal, resulting in 100 total trials per evaluation checkpoint. We report the average success rate across these trials. The specific hyperparameter configurations used for these experiments are detailed in Table~\ref{app:common_hyperparams}.

\begin{table}[!h]
\caption{Common hyperparameters for experiments.}
\label{app:common_hyperparams}
\begin{center}
\begin{small}
{
\begin{tabular}{ll}
    \toprule
    Hyperparameter & Value \\
    \midrule
    Learning rate & 3e-4 \\
    Optimizer & Adam \\
    Minibatch size & 1024 \\
    Gradient steps & 2.5M \\
    MLP size & [512, 512, 512] \\
    Activation function & GELU \\
    Target network update rate & 0.005 \\
    Discount factor $\gamma$ & 0.995 \\
    Horizon reduction factor $n$ & 1 (\texttt{cube-double}, \texttt{cube-triple}), 5 (\texttt{cube-quadruple}) \\
    Subgoal steps $k$ & 25 \\
    Expectile $\kappa$ & 0.7 (HIQL, OTA), 0.9 (GCIQL, GCIVL) \\
    Policy extraction temperature $\alpha$ & 3.0 (HIQL, OTA), 1.0 (GCIQL), 10.0 (GCIVL) \\
    Goal representation dimension & 10 \\
    Actor $(p_{\text{cur}}^\mathcal{D}, p_{\text{traj}}^\mathcal{D}, p_{\text{rand}}^\mathcal{D})$ ratio & (0, 1, 0) \\
    Value $(p_{\text{cur}}^\mathcal{D}, p_{\text{traj}}^\mathcal{D}, p_{\text{rand}}^\mathcal{D})$ ratio & (0.2, 0.5, 0.3) \\
    \bottomrule
\end{tabular}
}
\end{small}
\end{center}
\end{table}

\subsection{Full Results}
We report full results in Table~\ref{tab:full_results_std}, including mean and standard deviation of success rates across 8 seeds.

\begin{table}[!h]
\caption{Full success rate comparison across all methods and environments. Results are reported as mean $\pm$ std. \textbf{Bold} numbers indicate results at or above 95\% of the best performance per row. Cube tasks and \texttt{scene-play} are averaged over 8 seeds; visual tasks are averaged over 3 seeds.}
\label{tab:full_results_std}
\centering
\footnotesize
\setlength{\tabcolsep}{3pt}
\begin{tabular}{l c c c c c c}
\toprule
& \multicolumn{2}{c}{HIQL$^\text{OTA}$} & \multicolumn{2}{c}{H--Flow$^\text{OTA}$} & \multirow{2}{*}{GCIQL} & \multirow{2}{*}{GCIVL} \\
\cmidrule(lr){2-3} \cmidrule(lr){4-5}
\textbf{Environment} & Value Rep. ($\phi_V$) & Actor Rep. ($\phi_A$) & Value Rep. ($\phi_V$) & Actor Rep. ($\phi_A$) & & \\
\midrule
\texttt{cube-double-play}     & 0.26\,\tiny{±0.10} & 0.42\,\tiny{±0.03} & 0.26\,\tiny{±0.04} & 0.43\,\tiny{±0.05} & \textbf{0.69\,\tiny{±0.03}} & 0.36\,\tiny{±0.05} \\
\texttt{cube-double-noisy}    & 0.25\,\tiny{±0.02} & 0.29\,\tiny{±0.03} & 0.28\,\tiny{±0.04} & \textbf{0.33\,\tiny{±0.04}} & 0.13\,\tiny{±0.01} & 0.20\,\tiny{±0.03} \\
\texttt{cube-triple-play}     & 0.16\,\tiny{±0.03} & 0.41\,\tiny{±0.04} & 0.15\,\tiny{±0.03} & \textbf{0.45\,\tiny{±0.05}} & 0.30\,\tiny{±0.04} & 0.08\,\tiny{±0.02} \\
\texttt{cube-triple-noisy}    & 0.09\,\tiny{±0.03} & 0.19\,\tiny{±0.04} & 0.10\,\tiny{±0.01} & \textbf{0.23\,\tiny{±0.04}} & 0.14\,\tiny{±0.05} & 0.14\,\tiny{±0.03} \\
\texttt{cube-quadruple-play}  & 0.01\,\tiny{±0.01} & \textbf{0.28\,\tiny{±0.05}} & 0.01\,\tiny{±0.01} & 0.25\,\tiny{±0.08} & 0.01\,\tiny{±0.01} & 0.00\,\tiny{±0.00} \\
\texttt{cube-quadruple-noisy} & 0.02\,\tiny{±0.01} & 0.08\,\tiny{±0.02} & 0.01\,\tiny{±0.01} & \textbf{0.11\,\tiny{±0.03}} & 0.00\,\tiny{±0.00} & 0.00\,\tiny{±0.00} \\
\midrule
\texttt{scene-play}           & 0.56\,\tiny{±0.07} & \textbf{0.78\,\tiny{±0.04}} & 0.69\,\tiny{±0.05} & \textbf{0.81\,\tiny{±0.05}} & 0.53\,\tiny{±0.05} & 0.54\,\tiny{±0.06} \\
\midrule
\texttt{visual-cube-play}          & 0.05\,\tiny{±0.03} & \textbf{0.53\,\tiny{±0.06}} & 0.07\,\tiny{±0.04} & 0.01\,\tiny{±0.01} & 0.01\,\tiny{±0.01} & 0.13\,\tiny{±0.01} \\
\texttt{visual-scene-play}         & 0.23\,\tiny{±0.07} & \textbf{0.53\,\tiny{±0.02}} & 0.19\,\tiny{±0.15} & 0.07\,\tiny{±0.01} & 0.10\,\tiny{±0.01} & 0.36\,\tiny{±0.06} \\
\bottomrule
\end{tabular}
\end{table}

\clearpage
\section{Q-Function-Based Representation}
\label{app:q_rep}

This appendix supplements the main text with a discussion of Q-function-based goal representations, which are also widely adopted in offline GCRL~\cite{(QC)Li2025,(DQC)Li2026,(CGQ)Song2026}. We define Q-sufficiency, show that it implies action-sufficiency under a mild assumption, and empirically verify that a Q-based representation achieves performance comparable to our actor-based representation. Together, these results indicate that Q-based representations are themselves an instance of action-sufficient representations within our framework, supporting our broader thesis that the key property is action sufficiency rather than the specific learning objective.

\subsection{Q-Sufficiency Implies Action-Sufficiency}
\label{app:q_sufficiency_proof}

We first define the optimal goal-conditioned Q-function. With a discount factor $\gamma \in (0, 1]$ and reward function $r(s, g)$, the optimal Q-function is
\begin{equation}
Q^\star(s, a, g) = \sup_\pi \mathbb{E}\left[ \sum_{t \geq 0} \gamma^t r(S_t, g) \,\Big|\, S_0 = s,\, A_0 = a \right],
\end{equation}
where the supremum is taken over all possible policies. Following the same convention as in the main text, we drop the superscript $\star$ and treat $Q$ as a random variable $Q := Q(S, A, G)$.

We now formalize Q-sufficiency in parallel with our definitions of action sufficiency (Definition~\ref{def:action-sufficiency}) and value sufficiency (Definition~\ref{def:value-sufficiency}).

\begin{definition}[Q-Sufficient Representation]
\label{def:q_suff}
A representation $Z = \phi(S, G)$ is \emph{Q-sufficient} if there exists a measurable function $q$ such that
\begin{equation}
Q(S, A, G) = q(S, A, Z) \quad \text{a.s.}
\end{equation}
Equivalently, $I(Q; G \mid S, A, Z) = 0$.
\end{definition}

Q-sufficiency states that $Z$ preserves all information about the optimal Q-function: once $(S, A, Z)$ is given, $G$ provides no additional information about $Q$.

To relate Q-sufficiency to action-sufficiency, we adopt a canonical specification of the optimal action distribution:

\begin{assumption}[Uniform Optimal Policy]
\label{assump:uniform_argmax}
For each $(s, g)$, the optimal action distribution $P(A \mid S=s, G=g)$ is the uniform distribution over the argmax set
\begin{equation}
\arg\max_{a \in \mathcal{A}} \, Q(s, a, g).
\end{equation}
\end{assumption}

This assumption is consistent with the principle of maximum entropy~\cite{(MaxEnt)Jaynes1957}: among all optimal policies, which by definition achieve the maximum expected return, the uniform distribution over the argmax set has the highest entropy. We note that this assumption is canonical rather than restrictive, as the same Q-sufficiency argument extends to any tie-breaking rule that depends only on $(s, a, z)$.

\begin{proposition}[Q-Sufficiency Implies Action-Sufficiency]
\label{prop:q_implies_action}
Under Assumption~\ref{assump:uniform_argmax}, if $Z = \phi(S, G)$ is Q-sufficient, then it is action-sufficient, i.e., $I(A; G \mid S, Z) = 0$.
\end{proposition}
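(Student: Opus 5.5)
The plan is to show that, under Assumption~\ref{assump:uniform_argmax}, the conditional action law $P(\cdot\mid S,G)$ is almost surely a measurable function of $(S,Z)$ alone. Once that holds, the posterior laws coincide, $P(\cdot\mid S,G)=P(\cdot\mid S,Z)$, and by the characterization following Definition~\ref{def:action-sufficiency} we get $I(A;G\mid S,Z)=0$.

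\textbf{Step 1: transfer $Q$ to $(S,A,Z)$ for every action.} Q-sufficiency gives $Q(S,A,G)=q(S,A,Z)$ almost surely, but only along the random action $A$. We need the identity $Q(s,a,g)=q(s,a,\phi(s,g))$ for every $a\in\mathcal A$ and $P_{S,G}$-almost every $(s,g)$, because the argmax ranges over all actions, including those outside the support of $A$. I expect this to be the main obstacle: since $A$ is the \emph{optimal} action, the almost-sure identity says nothing about suboptimal actions.

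My first attempt is to read Definition~\ref{def:q_suff} as a functional identity in $a$, namely $Q(s,\cdot,g)=q(s,\cdot,\phi(s,g))$ for almost every $(s,g)$. This is the reading used in Appendix~\ref{app:dual_rep}'s strict sufficiency and the natural reading of ``$Z$ preserves all information about the optimal Q-function.'' I would state this interpretation explicitly. If one insists on the weaker almost-sure version, I would instead take the joint law with $A$ drawn from a reference measure of full support on $\mathcal A$. In the finite case a uniform $A$ is enough; the equality then holds for every $a$ off a null set, and hence for every $a$ when $\mathcal A$ is finite.

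\textbf{Step 2: express the optimal action law through $(S,Z)$.} With the functional identity in hand, the argmax set satisfies $\arg\max_a Q(s,a,g)=\arg\max_a q(s,a,\phi(s,g))=:\mathcal A^\star(s,z)$, which depends only on $(s,z)$. By Assumption~\ref{assump:uniform_argmax}, $P(\cdot\mid S=s,G=g)=\mathrm{Unif}(\mathcal A^\star(s,\phi(s,g)))=:\kappa(\cdot\mid s,z)$. For general $\mathcal A$ I would need measurability of $(s,z)\mapsto\kappa$; I would simply note this for finite $\mathcal A$, or for any measurable tie-breaking rule, as in the remark after the assumption.

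\textbf{Step 3: conclude.} By the Goal-Marginalization Lemma (Lemma~\ref{lem:goal_marginalization}),
\[
P(A\in B\mid S,Z)=\mathbb E[P(A\in B\mid S,G)\mid S,Z]=\mathbb E[\kappa(B\mid S,Z)\mid S,Z]=\kappa(B\mid S,Z).
\]
Hence $P(\cdot\mid S,G)=P(\cdot\mid S,Z)$ almost surely. Substituting into Eq.~\eqref{eq:cmi_proof_ref}, the expected KL divergence vanishes, so $\Delta_A=0$.
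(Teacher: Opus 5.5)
Your proposal is correct and follows the paper's route: the argmax set of $q(s,\cdot,\phi(s,g))$ depends only on $(s,z)$, so the uniform optimal law is $\sigma(S,Z)$-measurable and equals $P(\cdot\mid S,Z)$, giving $I(A;G\mid S,Z)=0$; the paper asserts this last identification directly, whereas you justify it through the Goal-Marginalization Lemma. Your Step 1 caveat is well-founded and worth keeping explicit, because the paper's proof silently uses the for-all-$a$ reading of Q-sufficiency, and under the literal almost-sure reading along the optimal $A$ one has $Q(S,A,G)=V(S,G)$, so any value-sufficient $Z$ (e.g.\ $Z=V$ with $q(s,a,z)=z$) would qualify and the proposition would fail.
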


\begin{proof}
By Q-sufficiency, there exists a measurable function $q$ such that $Q(s, a, g) = q(s, a, z)$ for $z = \phi(s, g)$ almost surely. Hence, for any $(s, g)$ mapped to $z$,
\begin{equation}
\arg\max_{a} Q(s, a, g) = \arg\max_{a} q(s, a, z),
\end{equation}
so the argmax set is determined by $(s, z)$ alone.

By Assumption~\ref{assump:uniform_argmax}, $P(A \mid S=s, G=g)$ is the uniform distribution over this argmax set. Since the set depends on $(s, g)$ only through $z$, the conditional distribution $P(A \mid S=s, G=g)$ is also determined by $(s, z)$ alone. That is,
\begin{equation}
P(A \mid S, G) = P(A \mid S, Z) \quad \text{a.s.}
\end{equation}
This implies $A \perp G \mid (S, Z)$, and therefore $I(A; G \mid S, Z) = 0$, i.e., $Z$ is action-sufficient.
\end{proof}

\subsection{Empirical Comparison}
\label{app:q_rep_results}

We empirically evaluate Q-based goal representations on the OGBench \texttt{cube} tasks. The Q-based representation $\phi_Q$ is obtained by training an action-conditioned Q-function $Q_\eta(s, a, \phi_Q(s, g))$ with the encoder $\phi_Q$ jointly learned via the standard TD objective. The encoder $\phi_Q$ is then frozen and used as the goal representation for the hierarchical policy, in direct analogy with how $\phi_V$ and $\phi_A$ are used in the main text.

\begin{table}[h]
\centering
\small
\caption{Success rate comparison of Q-based and actor-based representations on \texttt{cube} datasets. Results are reported as mean $\pm$ std, averaged over 4 seeds.}
\label{tab:q_rep_results}
\setlength{\tabcolsep}{6pt}
\begin{tabular}{l c c}
\toprule
\textbf{Goal Rep.} & \texttt{cube-double-play} & \texttt{cube-triple-play} \\
\midrule
Q Rep. ($\phi_Q$)                     & 0.42 $\pm$ 0.04 & 0.32 $\pm$ 0.03 \\
Actor Rep. ($\phi_A$, ours)           & \textbf{0.42 $\pm$ 0.03} & \textbf{0.41 $\pm$ 0.04} \\
\bottomrule
\end{tabular}
\end{table}

Table~\ref{tab:q_rep_results} shows that the Q-based representation achieves performance comparable to the actor-based representation, matching it on \texttt{cube-double-play} and trailing modestly on \texttt{cube-triple-play}. This is consistent with Proposition~\ref{prop:q_implies_action}: under our assumption, Q-sufficiency implies action-sufficiency, and a representation that is Q-sufficient should yield an effective planner-controller interface.

These empirical results support our broader thesis: the key property of an effective goal representation is action sufficiency, rather than the specific learning objective used to obtain it. Representations trained via different objectives---actor NLL or Q-learning---can both yield action-sufficient features, provided they target the structure necessary for optimal action prediction. This stands in contrast to value-based representations, which optimize for a scalar value signal that may discard action-relevant information within value level sets.
\section{High-Level Predictability of Goal Representations}
\label{app:predictability}

A goal representation $\phi$ plays a dual role in hierarchical control: it must (\textit{i}) preserve sufficient information for the low-level policy to predict optimal actions (action sufficiency), and (\textit{ii}) be compact and predictable enough for the high-level policy to generate it as a subgoal. While the main text focuses on the former, this appendix empirically examines the latter. We address a natural concern: does pursuing action sufficiency come at the cost of high-level predictability?

\subsection{Motivation}

As noted in our discussion on the necessity of goal representations (Section~\ref{sec:experiment}) and in our limitations (Section~\ref{sec:conclusion}), an effective goal representation must serve as an interface between the planner and the controller---preserving action-relevant information while remaining predictable for the high-level policy. Raw goals and autoencoder-based representations, for instance, achieve high action sufficiency but retain nuisance information that makes high-level subgoal prediction harder. A natural concern is whether our actor-based representation $\phi_A$, which explicitly preserves action-relevant information, might itself burden the high-level policy with additional predictive difficulty.

\subsection{Evaluation Protocol}

To assess high-level predictability, we measure how accurately the trained high-level policy $\pi^h$ predicts the representation of oracle subgoals along optimal trajectories. Specifically, given an oracle trajectory $\tau^\star = (s_0, s_1, \ldots, s_t, \ldots, s_{t+k}, \ldots, s_T)$, we compute the L2 prediction loss between the high-level policy output and the representation of the corresponding $k$-step oracle subgoal:
\begin{equation}
\mathcal{L}_{\text{pred}}(\phi) = \mathbb{E}_{\tau^\star, t} \left[ \big\| \pi^h(s_t, g) - \phi(s_t, s_{t+k}) \big\|_2^2 \right].
\end{equation}
A lower $\mathcal{L}_{\text{pred}}$ indicates that the representation of $k$-step subgoals is more predictable from $(s_t, g)$.

\subsection{Results}

\begin{table}[h]
\centering
\small
\caption{High-level prediction loss $\mathcal{L}_{\text{pred}}$ on \texttt{cube} tasks (\texttt{play} datasets), averaged over 4 seeds. Lower is better.}
\label{tab:predictability}
\setlength{\tabcolsep}{6pt}
\begin{tabular}{l c c}
\toprule
\textbf{Goal Rep.} & \texttt{cube-double-play} & \texttt{cube-triple-play} \\
\midrule
Value Rep. ($\phi_V$)                 & 0.53 $\pm$ 0.01 & 0.66 $\pm$ 0.01 \\
\textbf{Actor Rep. ($\phi_A$, ours)}  & \textbf{0.30 $\pm$ 0.02} & \textbf{0.45 $\pm$ 0.01} \\
\bottomrule
\end{tabular}
\end{table}

Table~\ref{tab:predictability} shows that the actor-based representation $\phi_A$ achieves \emph{lower} high-level prediction loss than the value-based representation $\phi_V$ on both \texttt{cube} tasks. This indicates that pursuing action sufficiency does not come at the cost of high-level predictability; if anything, the actor-based representation is more amenable to high-level prediction than its value-based counterpart in our experiments.

\subsection{Discussion}

This result complements the low-level success rate improvements reported in the main text (Tables~\ref{tab:rl-results-state} and~\ref{tab:rl-results-visual}). Taken together, the two findings indicate that the actor-based representation simultaneously improves both ends of the hierarchical interface: it strengthens low-level controllability while not harming---and even modestly improving---high-level predictability.

We caution against over-interpreting this finding as a general guarantee. Predictability depends on multiple factors beyond the representation's information content, including the smoothness of the latent space and architectural choices. As we note in our limitations (Section~\ref{sec:conclusion}), balancing action-relevant information with the compressibility required for efficient high-level planning remains an important direction for future work. Nevertheless, our results demonstrate that action sufficiency and high-level predictability are not inherently in tension, supporting the actor-based representation as a viable planner-controller interface.


\end{document}